\newtheorem{theorem}{Theorem}[section]
\newtheorem{lemma}[theorem]{Lemma
}
\begin{document}
\title{Media and responsible AI governance: a game-theoretic and LLM analysis %The role of the commentariat in promoting effective AI regulation}
}

%R1: When fix commentariat as cooperators. Regulator defect, everyone else cooperate. If media is objective, can avoid hard regulation.

%R2: When commentariat are agents, incentives need to be strong enough so that they are more effective than regulators (when commentariat investigate developers). If media is defect, regulations need to be stromng, or users have insufficient info. When investigasting developerss incentives small all defect and user always trusts. When incentives sufficiently high compared to costs, commentariat Cs, reg Cs (reg always follows comm), dev plays C, user C. Users always play CT because epsilon is positive so they always get some positive benefit for adopting. If investigate regulators, similar to previous model?

%C_I is a key param. For high C_I, no matter who is investigated, users use CT and others defect. For negative epsilon they may not adopt CT. User always trust media if C_I is high.

%Commentariat focuses mostly on developers at the moment, as not many regulators in place
\author{Nataliya Balabanova$^{1}$}
\author{Adeela Bashir$^{2}$}
\author{Paolo Bova$^{2}$}
\author{Alessio Buscemi$^{3}$}
\author{Theodor Cimpeanu$^{4}$}
\author{Henrique Correia da Fonseca$^{5}$}
\author{Alessandro Di Stefano$^{2}$}
\author{Manh Hong Duong$^{1}$}
\author{Elias Fernández Domingos$^{6,7}$}
\author{Antonio Fernandes$^{5}$}
\author{The Anh Han$^{2,*}$}
\author{Marcus Krellner$^{4}$}
\author{Ndidi Bianca Ogbo$^{2}$}
\author{Simon T. Powers$^{8}$}
\author{Daniele Proverbio$^{9}$}
\author{Fernando P. Santos$^{10}$}
\author{Zia Ush Shamszaman$^{2}$}
\author{Zhao Song$^{2}$}
%\author{Filippo Zimmaro$^{8,9}$}

% \address{}
% \author{ }
% \address{School of Computing, Engineering and Digital Technologies, United Kingdom}

%Note: public regulations vs private regulation

 \maketitle
	{\footnotesize
		\noindent
        $^{1}$ School of Mathematics, University of Birmingham\\
		$^{2}$ School Computing, Engineering and Digital Technologies, Teesside University\\
        $^{3}$ Luxembourg Institute of Science and Technology\\
		$^{4}$  School of Mathematics and Statistics, University of St Andrews\\
        $^{5}$ INESC-ID and Instituto Superior Técnico, Universidade de Lisboa \\
        $^{6}$ Machine Learning Group, Universit\'e libre de Bruxelles\\ 
        $^{7}$ AI Lab, Vrije Universiteit Brussel\\
	    $^{8}$ Division of Computing Science and Mathematics, University of Stirling\\        
        %$^{5}$ Department of Mathematics, University of Bologna\\
        %$^{9}$ Department of Computer Science, University of Pisa\\
        $^{9}$ Department of Industrial Engineering, University of Trento\\
        $^{10}$ University of Amsterdam \\

\noindent  $^\star$ Corresponding author: The Anh Han (T.Han@tees.ac.uk)
	}

  \maketitle
% 	{\footnotesize
% 		\noindent
% 		$^{2}$ School Computing, Engineering and Digital Technologies, Teesside University\\
% 		$^{3}$ Biological and Environmental Sciences, University of Stirling\\
%         $^{4}$ School of Mathematics, University of Birmingham\\
%         $^{4}$ Machine Learning Group, Universit\'e libre de Bruxelles\\ 
%         $^{6}$  AI Lab, Vrije Universiteit Brussel\\
% 	    $^{7}$ Division of Computing Science and Mathematics, University of Stirling\\ 
%         $^{8}$ Department of Mathematics, University of Bologna\\
%         $^{5}$ Department of Computer Science, University of Pisa\\
%   $^\star$ Corresponding author: The Anh Han (T.Han@tees.ac.uk)
% 	}

\section*{abstract}

\noindent This paper investigates the complex interplay between AI developers, regulators, users, and the media in fostering trustworthy AI systems.  Using evolutionary game theory and large language models (LLMs), we model the strategic interactions among these actors under different regulatory regimes.  The research explores two key mechanisms for achieving responsible governance, safe AI development and adoption of safe AI: incentivising effective regulation through media reporting, and conditioning user trust on commentariats' recommendation.  The findings highlight the crucial role of the media in providing information to users, potentially acting as a form of "soft" regulation by investigating developers or regulators, as a substitute to institutional AI regulation (which is still absent in many regions). Both game-theoretic analysis and LLM-based simulations reveal conditions under which effective regulation and trustworthy AI development emerge, emphasising the importance of considering the influence of different regulatory regimes from an evolutionary game-theoretic perspective. The study concludes that effective governance  requires managing incentives and costs for high quality commentaries.    

\vspace{3mm}

\noindent\textbf{Keywords:} AI governance, AI regulation, responsible AI, game theory, LLM, trustworthy AI, behavioural dynamics. 
\newpage
\section{Introduction}

%The figure schematically illustrates the core features of the three-population model of AI governance. Users (left) can either trust (T) or not trust (N) the AI system, in which case they do not adopt the system and get zero benefit. developers (center) can either defect by creating unsafe AI products (D) or cooperate by creating safe ones (C), which entails additional costs. The government (right) can choose to employ regulators (C) at some costs. Those will punish defecting developers with further costs to themselves.

%research questions: what is the role of media? Who should media investigate -- developers or regulators (regulators will have a much lower cost to investigate, as by definition they make their actions public)

%doesn't even have to to be regfulatory markets, can be independant regulatory agencies (Maggetti, 2012).

%when will the reputional benefits to commentariat, b_i, be sufficiently large? This is likely to be when there is a large marketplace of commentariat for users to choose between. Can we control b_i through incentives? Probably not directly -- commentariat should be free of government influence?  

A common narrative poses that the route to trustworthy artificial intelligence (AI) is enhanced though transparency and regulation of AI systems. In this account, regulation will incentivise developers to build trustworthy AI, which users are then justified in trusting and adopting. However, this interpretation ignores the complex socio-technical environment in which developers, regulators and users are embedded \citep{powers2023stuff}. Governments, and the regulators appointed by them, are both self-interested agents that can be expected to make strategic decisions \cite{clark2019regulatory, anderljung2023frontier,bengio2024managing,hammond2025multiagentrisksadvancedai,hadfield2023regulatory}. Likewise, developers are also self-interested actors whose goals may not completely align with the goals of governments, regulators, and ultimately users. Moreover, when users make a decision about whether to trust a particular AI system or not, they base this decision on a number of factors, including their prior dispositions, the quality of information about the system they have access to, and their trust in institutions such as scientists, regulators and the media \citep{lewis2022like,sutrop2019should,lansing2016trust,andras2018trusting}. In the process of ensuring trustworthy, beneficial, and trusted AI, accounting for these complex aspects is key to designing effective regulatory mechanisms.

Unfortunately, clear and abundant data about the effects of different regulatory mechanisms and strategies are not yet available; moreover, given the rapid pace of AI development, waiting for this data to become available before comparing possible mechanisms is not even affordable and desirable, as the landscape may have already changed. Evolutionary Game Theory (EGT) modelling \citep{sigmund2010calculus,hofbauer1998evolutionary}, grounded in widely accepted theories about how people and self-interested organisations behave \citep{binmore2005natural}, can provide a solution, by providing theoretical predictions about how people and organisations might behave under different conditions \citep{han2020regulate, han2022voluntary, bova2023both}. In previous work, we developed EGT models to compare the effectiveness of different mechanisms to incentivise independent regulators to monitor the behaviour of developers effectively. We found that effective regulation and safe development required users to condition their trust in developers on the effectiveness of regulators \citep{alalawi2024trust}. Nevertheless, this work left unanswered the question of \textit{how} users would obtain information on the behaviour of developers and regulators. In fact, there is increasing evidence that people's trust in AI developers is affected by media consumption \citep{yang2023ai}. To this aim, we here explore the role of media and other opinion leaders -- which we hereafter refer to as \textit{commentariat} -- in providing this information through investigative journalism \citep{maggetti_media_2012}.
Crucially, we consider the fact that the commentariat can themselves be self-interested agents, who do not merely report objectively on developments, but can also act to shape the agenda \citep{mccombs1972agenda}. 

In this work, we develop an EGT model to explore and quantify the role of the commentariat as self-interested agent in informing users' trust decisions. We consider two possible roles for the commentariat. First, they can choose to investigate developers, thereby potentially acting as a form of ``soft'' regulation on developers' behaviour. Second, they can act as a watchdog on the behaviour of regulators. We compare the effectiveness of these two distinct roles on incentivising developers to build safe AI systems, and for users to trust these systems. In our model, we investigate two potential strategies for the commentariat, either investing resources in providing quality information (cooperate), or spend less effort in investigations (defect). Users can condition their decision based on the information the commentariat provides, either trusting and adopting the AI system (conditional trust), or choosing not to trust and not adopt it. Developers can decide to invest time and effort in creating safe AI systems (cooperate), or avoid the burden of doing this (defect). 

We present a framework for formalising the strategic interaction between users, commentariat, AI system developers, and regulators as a game (Fig.~\ref{fig1}).

In addition to using traditional game theoretic approaches, recent research has suggested that Large Language Models (LLM) can be used to conduct experiments on EGT models \cite{zhao2023survey,lu2024llms}, and LLMs have been hypothesised to enable suitable replicas of human actions \cite{park2023generative, bail2024can}.
%We also exploit large language models (LLMs) to obtain complementary models and predictions. LLMs have been suggested to enable suitable replicas of human actions \cite{park2023generative, bail2024can}. This suggests that they can be used to conduct experiments on EGT models. 
We thus complement our investigation by leveraging LLMs; to this end, we create a new framework to investigate the regulatory dynamics among the four agents (commentariat, developers, regulators, and users) considered in our model, and compare the results with game theoretic predictions. In our setting, four LLM agents interact dynamically, after being prompted in such a way as to represent the four desired actors. Because it has been observed that different LLMs may produce contrasting results in various tasks \cite{buscemi2024large, buscemi2024chatgpt, lee2024evaluating}, we employ two different models: ChatGPT-4o from OpenAI's GPT family \cite{chatgpt} and Mistral Large by Mistral \cite{mistral}.

\begin{figure*}
\begin{center}
\includegraphics[width=1\textwidth]{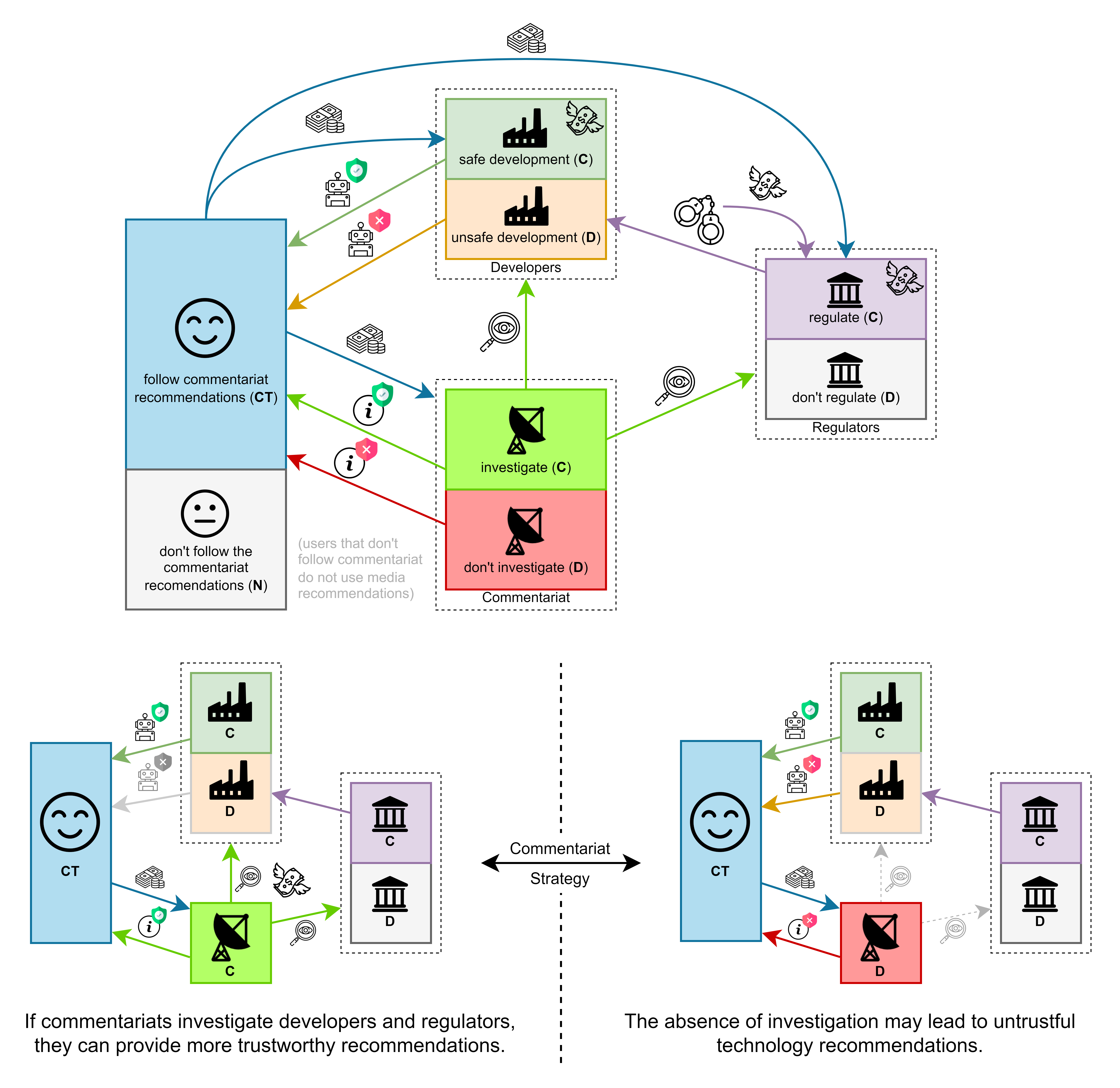}
\caption{\textbf{Core features}. The figure schematically illustrates the core features of the four-population model of AI governance. Users can either follow commentariat recommendations (Conditional Trust - CT) or not (N) (left). If they decide to follow the recommendations, their payoff depend on whether the commentariat invested in providing high quality information (investigate) or not (do not investigate) (lower middle).
Commentariat can investigate either developers or regulators.
Developers (upper middle) can either defect by creating unsafe AI products (D) or cooperate with regulations by creating safe ones (C), which entails additional costs. Regulators receive a benefit when users adopt AI systems. The regulator (right) can choose to cooperate by investing resources in regulating effectively (C) or not (D). If cooperating regulators catch unsafe developers, the latter are punished.}
\label{fig1}
\end{center}
\end{figure*}

By incorporating the commentariat as a distinct agent, along with users, developers and regulators, we aim to address the following key questions:
\begin{enumerate}
    \item What are the conditions under which quality investigation (cooperation) by the commentariat can foster responsible innovation, effective regulation of AI and users' trust? 
    %\item How does the commentariat influence public trust in AI and the strategic behaviour of regulators, developers, and users?
    \item Under which conditions will the commentariat carry out effective investigation of developers and regulators (cooperate)?
    \item Is it more effective for the commentariat to investigate and provide information on developers or regulators?
    %\item How do the interactions between these agents evolve over time, and what interventions can promote desirable outcomes for society?
\end{enumerate}

%ADD SENTENCES BASED ON THE FINDINGS

In the next section, we describe the models and methods, including a four-population model of AI governance and the evolutionary methods for analysing the model from both finite and infinite population perspectives. Results for each type of analysis and Discussion sections will follow. %We also provide additional results in the Supporting Information.
\vspace{-0.3in}
\section{Models and Methods}

%\subsection{Payoff Matrix - basic 4 population model}

\subsection{Four population model of AI governance}

We start by constructing a model of an AI regulatory ecosystem, extending the three population model in \citep{alalawi2024trust} to capture the role of commentariat. %This is similar to current models of regulations for information systems, e.g. the General Data Protection Regulations (GDPR)  and the AI Act of the European Union. 
The model involves four populations representing the four actors in the regulatory ecosystem: AI users, commentariat, developers, and regulators. In each population, individuals can choose different actions (also called strategies). 
%In our model, we investigate two potential strategies for the commentariat, either investing in providing quality information (cooperate), or spend less effort in investigations (defect). Users can condition their decision based on the information the commentariat provides, either trusting and adopting the AI system (conditional trust), or choosing not to trust and adopt it (not adopt). Developers can decide to invest time and effort in creating safe AI systems (cooperate), or avoid the burden of doing this (defect).
In our model, a user can decide to follow commentariat recommendations (Conditional Trust -- CT) or not (N). If the user decides to follow the recommendations, the payoff will depend on whether the commentariat invests in providing high quality information (investigate) or not (do not investigate). Commentariat can investigate either developers or regulators. Developers can decide to defect by creating unsafe AI products (D) or cooperate by creating safe ones (C), which entails additional costs. Regulators receive a benefit when users adopt AI systems, for example through taxation on sales. The regulator can decide to invest in regulating effectively (C) at some cost, or not invest in regulating effectively (D). If cooperating regulators catch
unsafe developers, the latter are punished (see Table~\ref{tab:role_explanation}). Table~\ref{tab:parameters} explains the key parameters of the models.

\begin{table}[!h]
    \centering
\begin{tabular}{c|p{2cm}|p{7cm}}
   \textbf{Role}  & \textbf{Actions} & \textbf{Explanation} \\
   \hline
   Commentators  & \centering C/D & Investigates and provides an \emph{informed} recommendation (C), which means that it makes transparent the action of the developer/regulator, or provides an \emph{uninformed} recommendation (D) \\
   \hline
   Users & \centering CT/N & Either follows the commentator recommendations about whether to adopt the technology (CT) or never adopts the technology (N) \\
   \hline
   Developers & \centering C/D & Produces a SAFE (C) or UNSAFE (D) technology \\
   \hline
   Regulators & \centering C/D & The regulator can decide to invest in regulating effectively (C) by paying the cost, or do not regulate effectively (D). 
\end{tabular}

    \caption{\textbf{Roles and their possible actions in the AI regulatory ecosystem.}}
    \label{tab:role_explanation}
\end{table}

\begin{table}[!h]
    \centering
    \begin{tabular}{c|p{10cm}}
        %\hline
        %\hline
        %\multicolumn{2}{c}{\textbf{Rewards}} \\
        %\hline
        %\hline
        \textbf{Parameter}  & \textbf{Explanation} \\
        \hline
        $b_{I}$  & Reputational benefit a commentator receives when making a correct recommendation \\
        \hline
        $b_{U}$ & Benefit a user receives when adopting a safe technology \\
        \hline
        $b_{P}$ & Benefit a developer receives when their technology is adopted \\
        \hline
        $b_{R}$ & Benefit a regulator receives when a user adopts the technology \\
        \hline
        $b_fo$ & Benefit a regulator receives when catching unsafe behaviour from a developer \\
        \hline
        %\hline
        %\multicolumn{2}{c}{\textbf{Costs}} \\
        %\hline
        %\hline
        %\textbf{Parameter}  & \textbf{Explanation} \\
        %\hline
        $c_{I}$ & Cost for a commentator of providing an informed recommendation \\
        \hline
        $c_w$ & Reputational cost to a commentator of making an incorrect recommendation \\
        \hline
        $\epsilon$ & Fraction of user benefit when developers play \textit{D}, where $\varepsilon$ in [-$\infty$,1], also referred to as the (inverse) risk factor users take when adopting the technology \\
        \hline
        $c_{P}$ & Additional cost of creating safe AI (the cost of creating unsafe AI is normalised to 0) \\
        \hline
        $u$ & Cost of being punished (for a developer for being found developing unsafely) \\
        \hline
        $v$ & Cost for a regulator for punishing unsafe developers \\
        \hline
        $c_{R}$ & The cost of effective regulation (the cost of not doing this is normalised to 0) \\
        \hline
        $p_{w}$ & Probability that the recommendation of a commentator is \emph{incorrect} when they defect 
        % \multicolumn{2}{c}{\textbf{Probabilities}} \\
        % \hline
        % \hline
        % \textbf{Parameter}  & %\textbf{Explanation} \\
        % \hline
        % $p_{W1}$ & Probability that the recommendation of a commentator is correct if it cooperates \\
        % \hline
        % $p_{W2}$ & Probability that the recommendation of a commentator is correct if it defects \\
        % \hline
        % $p_{fo}$ & Probability that the unsafe behaviour of a developer is caught - This is set to 1 \\
    \end{tabular}

    \caption{\textbf{Explanation of the key parameters of the models.}}
    \label{tab:parameters}
\end{table}

\begin{figure*}
\begin{center}
\includegraphics[width=0.7\textwidth]{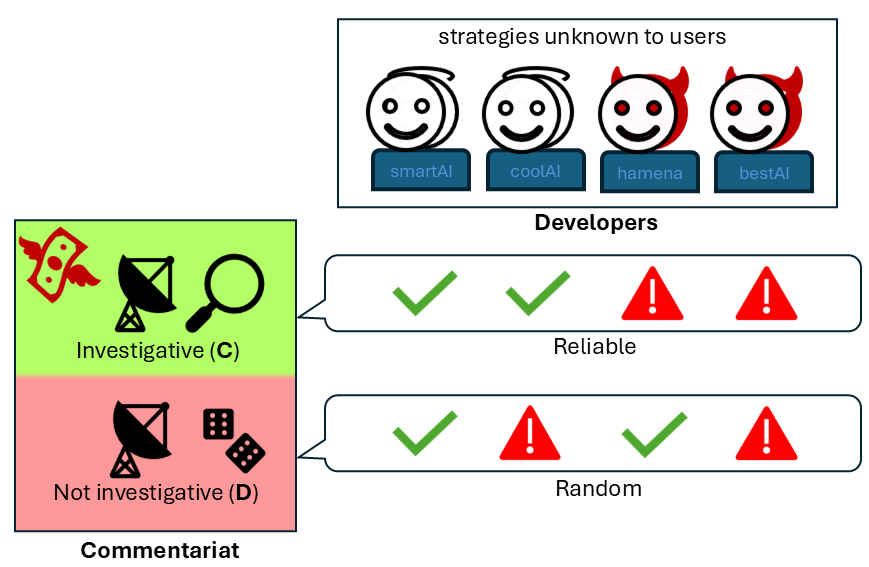}
\caption{\textbf{Function of the Commentariat}. The figure schematically illustrates what kind of information the two types of commentariat provides. The investigative (cooperating) will pay a cost to look for the hidden strategies of the developers, while the not investigative (defecting) will shun the cost and misclassify a developer with probability $p_w$.}
\label{fig2}
\end{center}
\end{figure*}

\begin{table}[p]
 \caption{\textbf{Payoff matrix} for the AI Governance model where \textbf{commentators investigate developers} (Commentariat (\textit{Com}), User, developer (\textit{Dev}), and Regulator (\textit{Reg})).}
    \centering
    \begin{tabular}{c|c|c|c||p{3cm}|p{2cm}|p{2.5cm}|p{3.5cm}}
        \multicolumn{4}{c||}{Actions} & \multicolumn{4}{c}{Payoffs} \\
        \hline
        Com & User & Dev & Reg & Com & User & Dev & Reg \\
        \hline
        \hline
        C & CT & C & C & $b_{I} - c_{I}$ & $b_{U}$ & $b_{P} - c_{P}$ & $b_{R} - c_{R}$ \\
        \hline
        C & CT & C & D & $b_{I} - c_{I}$ & $b_{U}$ & $b_{P} - c_{P}$ & $b_{R}$ \\
        \hline
        C & CT & D & C & $b_{I} - c_{I}$ & $0$ & $0$ & $- c_{R}$ \\
        \hline
        C & CT & D & D & $b_{I} - c_{I}$ & $0$ & $0$ & $0$ \\
        \hline
        C & N & C & C & $-c_{I}$ & $0$ & $-c_{P}$ & $-c_{R}$ \\
        \hline
        C & N & C & D & $-c_{I}$ & $0$ & $-c_{P}$ & $0$ \\
        \hline
        C & N & D & C & $-c_{I}$ & $0$ & $0$ & $-c_{R}$ \\
        \hline
        C & N & D & D & $-c_{I}$ & $0$ & $0$ & $0$ \\
        \hline
        D & CT & C & C & $(1-p_{w})b_{I} - p_{w}c_{w}$ & $(1-p_{w})b_{U}$ & $(1-p_{w})b_{P} - c_{P}$ & $(1-p_{w})b_{R} - c_{R}$ \\
        \hline
        D & CT & C & D & $(1-p_{w})b_{I} - p_{w}c_{w}$ & $(1-p_{w})b_{U}$ & $(1-p_{w})b_{P} - c_{P}$ & $(1-p_{w})b_{R}$ \\
        \hline
        D & CT & D & C & $(1-p_{w})b_{I} - p_{w}c_{w}$ & $p_{w}\epsilon b_{U}$ & $p_{w}(b_{P} - u)$ & $p_{w}(b_{R} + b_{fo} - v) - c_{R}$ \\
        \hline
        D & CT & D & D & $(1-p_{w})b_{I} - p_{w}c_{w}$ & $p_{w}\epsilon b_{U}$ & $p_{w}b_{P}$ & $p_{w}b_{R}$ \\
        \hline
        D & N & C & C & $0$ & $0$ & $-c_{P}$ & $-c_{R}$ \\
        \hline
        D & N & C & D & $0$ & $0$ & $-c_{P}$ & $0$ \\
        \hline
        D & N & D & C & $0$ & $0$ & $0$ & $-c_{R}$ \\
        \hline
        D & N & D & D & $0$ & $0$ & $0$ & $0$ \\
        \hline
        
    \end{tabular}
    %INVESTIGATE developers: Payoff Matrix with $p_{w1}=1$ and we call $p_{w2}\equiv p_{w}$ and $p_{fo}=1$. (Commentator \textit{Com}, User, Developer \textit{Dev} and Regulator \textit{R})}
    \label{tab:payoff_matrix_1}
\end{table}

The individual payoff earned in any one encounter (also called a game) depends on the strategy of the participating individuals. In each game, one user, one developer, one commentariat and one regulator participate. If the user follows commentariat recommendations when the commentariat is investing in providing an informed recommendation, and both the developer and the regulator cooperate (by complying and enforcing, respectively), the user benefits significantly from AI adoption, denoted by $b_U$. On the other hand, if the developer defects by not complying with the regulations, the user adopting AI is affected by unsafe AI, gaining a reduced or even negative benefit, denoted by $\epsilon \times b_U$, where $\epsilon \in [-\infty, 1]$. This parameter, $\epsilon$, also represents a \textit{risk factor} that users take when trusting and adopting the AI system.

Developers receive a benefit, denoted by $b_P$, when their technology is adopted, e.g., through sales. Complying with the regulations carries an additional cost $c_P$ of creating safe AI. While if they do not comply with the regulations and develop AI unsafely, they may be punished at the cost $u$ if they are found to be defecting. 

Regulators also earn a benefit, denoted by $b_R$, when the user trusts and adopts the technology. This corresponds to regulation being funded by taxes on the sales of AI products, or by governments investing more in regulation when there is more uptake of AI. Regulators pay a cost, denoted by $c_R$, to carry out effective regulation, e.g., through thorough auditing. When they pay this cost, we assume that they are rewarded an amount of $b_{fo}$ when they catch unsafe developers' behaviour (when users trust and adopt AI). In this case, the cooperative regulator pays an additional cost $v$ to administer this punishment.

Commentators receive a reputational benefit denoted by $b_I$, when they provide a correct recommendation about the safety of the AI system. They can pay a cost $c_I$ to provide an informed recommendation, which ensures that the recommendation is correct. On the other hands, defecting commentators do not pay this cost, but they can still earn the benefit $b_I$ if the recommendation happens to be correct with a probability $p_w$. If they defect, and make the wrong recommendation with probability $1-p_w$, they suffer a reputational cost of $c_w$.

As illustrated in Fig.~\ref{fig1}, commentators can decide to investigate or not either the developers or the regulators (see green arrows from the commentariat to developers and regulators). Fig.~\ref{fig2} schematically explains the role of the commentariat, which depends on the type of information they provide. 

%The investigative (cooperating) will pay a cost to look for the hidden strategies of the developers, while the not investigative (defecting) will shun the cost and misclassify a developer with probability 

Tables ~\ref{tab:payoff_matrix_1} and ~\ref{tab:payoff_matrix_2} define the payoffs in the two cases of commentariat investigating developers or regulators, respectively.

\begin{table}[h!]
    \centering
    \normalsize
    \caption{\textbf{Payoff matrix} for the AI Governance model where \textbf{commentators investigate AI regulators} (Commentariat (\textit{Com}), User, developer (\textit{Dev}), and Regulator (\textit{Reg})).}
    \centering
        \begin{tabular}{c|c|c|c||p{3cm}|p{2cm}|p{2.5cm}|p{5cm}}
        \multicolumn{4}{c||}{Actions} & \multicolumn{4}{c}{Payoffs} \\
        \hline
        Com & User & Dev & Reg & Com & User & Dev & Reg \\
        \hline
        \hline
        C & CT & C & C & $b_{I} - c_{I}$ & $b_{U}$ & $b_{P} - c_{P}$ & $b_{R} - c_{R}$ \\
        \hline
        C & CT & C & D & $b_{I} - c_{I}$ & $0$ & $ - c_{P}$ & $0$ \\
        \hline
        C & CT & D & C & $b_{I} - c_{I}$ & $\epsilon b_U$ & $b_P - u$ & $b_R - c_R - v  + b_{fo}$ \\
        \hline
        C & CT & D & D & $b_{I} - c_{I}$ & $0$ & $0$ & $0$ \\
        \hline
        C & N & C & C & $-c_{I}$ & $0$ & $-c_{P}$ & $-c_{R}$ \\
        \hline
        C & N & C & D & $-c_{I}$ & $0$ & $-c_{P}$ & $0$ \\
        \hline
        C & N & D & C & $-c_{I}$ & $0$ & $0$ & $-c_{R}$ \\
        \hline
        C & N & D & D & $-c_{I}$ & $0$ & $0$ & $0$ \\
        \hline
        D & CT & C & C & $(1-p_{w})b_{I} - p_{w}c_{w}$ & $(1-p_{w})b_{U}$ & $(1-p_{w})b_{P} - c_{P}$ & $(1-p_{w})b_{R} - c_{R}$ \\
        \hline
        D & CT & C & D & $(1-p_{w})b_{I} - p_{w}c_{w}$ & $p_{w}b_{U}$ & $p_{w}b_{P} - c_{P}$ & $p_{w}b_{R}$ \\
        \hline
        D & CT & D & C & $(1-p_{w})b_{I} - p_{w}c_{w}$ & $(1-p_{w})\epsilon b_{U}$ & (1-$p_{w})(b_{P} - u)$ & $(b_{R} -  c_{R}+ b_{fo} - v)(1-p_{w}) $ -  $p_W c_R$ \\
        \hline
        D & CT & D & D & $(1-p_{w})b_{I} - p_{w}c_{w}$ & $p_{w}\epsilon b_{U}$ & $p_{w}b_{P}$ & $p_{w}b_{R}$ \\
        \hline
        D & N & C & C & $0$ & $0$ & $-c_{P}$ & $-c_{R}$ \\
        \hline
        D & N & C & D & $0$ & $0$ & $-c_{P}$ & $0$ \\
        \hline
        D & N & D & C & $0$ & $0$ & $0$ & $-c_{R}$ \\
        \hline
        D & N & D & D & $0$ & $0$ & $0$ & $0$ \\
        \hline
        
    \end{tabular}
   % \caption{INVESTIGATE developerS: Payoff Matrix with $p_{w1}=1$ and we call $p_{w2}\equiv p_{w}$ and $p_{fo}=1$}
    \label{tab:payoff_matrix_2}
\end{table}

\subsection{AI agents setup}
\label{sub:AI_ag_setup}

The games are set using LLM agents whose payoffs are given as described above. To setup agents within a game-theoretic framework, we employ the Framework for AI Agents Bias Recognition using Game Theory (FAIRGAME) \cite{buscemi2025fairgame}. 
FAIRGAME enables testing of user-defined games, described in textual format and incorporating any desired payoff matrix.
Additionally, it allows for the specification of agent traits that will participate in these games.
The agents can be instantiated using any LLM of choice by invoking the corresponding APIs.

To run, FAIRGAME requires the following inputs:
\begin{itemize}
    \item \textbf{Configuration File:} A file that defines the setup of both the agents and the game. The default format is JSON.
    \item \textbf{Prompt Template:} A text file that defines the instruction template, providing a literal description of the game. It includes placeholders that are dynamically populated with information from the configuration file at each round, ensuring customization for each agent.
\end{itemize}

\begin{table}[h!]
    \centering
    \normalsize
    \caption{Parameters provided to FAIRGAME.}
    \label{tab:fairgame}

    \begin{tabular}{p{0.5\textwidth}|p{0.3\textwidth}}
        \hline
        \textbf{Parameter} &  \textbf{Value} \\
        \hline
            \textbf{Number of agents} & 4 \\
        \hline
            Names of the agents & regulator; developer; user; commentariat \\
        \hline
            Personalities of the agents & None; None; None; None\\
         \hline
            Underlying LLM & OpenAI GPT-4o; Mistral Large \\
        \hline
            Number of rounds & 1; \\
        \hline
            Agents communicate & False \\
         \hline
            Agents know the personalities of the others & False \\    
        \hline
            Stopping condition & None \\    
        \hline
        \end{tabular}
\end{table}

Table~\ref{tab:fairgame} presents the parameters used in the experiments, as specified in the configuration file. FAIRGAME simulates interactions among four distinct agents, each fulfilling a designated role: regulator, developer, user, and commentariat.

As reported in the table, the LLM underlying these agents is either OpenAI's GPT-4o or Mistral Large. Each simulation maintains consistency in model selection across all agents, meaning that in some experiments, all agents operate using GPT-4o, while in others, they all rely on Mistral Large. No experiment combines different models within a single game.

The study focuses on one-shot games, each consisting of a single round. Agents make decisions autonomously, without interacting with one another, ensuring complete independence in their actions. Furthermore, they are unaware of the personalities or strategic inclinations of their counterparts. While the framework allows for defining agent personalities, the main experiments set all personalities to None. This ensures that decisions are guided purely by their assigned roles, reflecting the default behavior of the LLMs without external influences.

Lastly, the game runs for the specified number of rounds without a predetermined stopping condition.
The template used for all experiments is available in Appendix \ref{app:template}.

\subsection{Methods}
\subsubsection{Stochastic dynamics for finite populations}
\label{subsection:finitepopulation}
\paragraph{Payoff calculation.} We consider four different well-mixed populations of Commentariat
 (Co), Users (U), developers (C) and Regulators (R) of sizes, respectively $N_{Co}$, $N_U$, $N_C$ and $N_R$. 
Let $x$ be the fraction of commentariats  that cooperate. Let $y$, $z$ and $\omega$ be respectively the fraction of  users that trust the AI system, and developers and Regulators that cooperate. Each game involves an individual randomly drawn from each population. The fitness that a commentariat, user, developer and regulator obtains in each game is respectively given by:
\begin{align}
f^{Co}_{X\in\{C,D\}}&=yzw P^{Co}_{XTCC}+yz(1-w) P^{Co}_{XTCD}+y(1-z)w P^{Co}_{XTDC}+y(1-z)(1-w) P^{Co}_{XTDD} 
\notag\\&\quad
+(1-y)zwP^{Co}_{XNCC}+(1-y)z(1-w) P^{Co}_{XNCD}+(1-y)%(1-z)w P^{Co}_{XNDC}
\notag\\&\quad 
+(1-y)(1-z)(1-w) P^{Co}_{XNDD}, \label{eq: UTN}  \\
f^U_{Y\in\{T,N\}}& = xzw P^{U}_{CYCC} + xz(1-w) P^{U}_{CYCD}   + x(1-z)w P^{U}_{CYDC} + x(1-z)(1-w) P^{U}_{CYDD} 
\notag\\&\quad
+   (1-x)zw P^{U}_{DYCC} + (1-x)z(1-w) P^{U}_{DYCD}   + (1-x)(1-z)w P^{U}_{DYDC} 
\notag\\&\quad
+ (1-x)(1-z)(1-ww) P^{U}_{CYDD}\\
  f^C_{Z\in\{C,D\}}& = xyw P^{U}_{CTZC} + xy(1-w) P^{U}_{CTZD}   + x(1-y)w P^{U}_{CNZC} + x(1-y)(1-w) P^{U}_{CNZD} 
\notag\\&\quad
+ (1-)yw P^{U}_{DTZC} + (1-x)y(1-w) P^{U}_{DTZD}   + (1-x)(1-y)w P^{U}_{DNZC} 
\notag\\&\quad
+ (1-x)(1-y)(1-w) P^{U}_{DNZD} \\
f^C_{W\in\{C,D\}}& = xyz P^{U}_{CTCW} +  xy(1-z) P^{U}_{CTDW} + x(1-y)z P^{U}_{CNCW} + x(1-y)(1-z) P^{U}_{CNDW}
\notag\\&\quad
+(1-x)yz P^{U}_{DTCW} +  (1-x)y(1-z) P^{U}_{DTDW} + (1-x)(1-y)z P^{U}_{DNCW}
\notag\\&\quad+ (1-x)(1-y)(1-z) P^{U}_{DNDW} 
\end{align}

The fitness (i.e. average payoff) is computed using the payoff matrix constructed in the models (see Tables~\ref{tab:payoff_matrix_1}-\ref{tab:payoff_matrix_2}). 

\vspace{3mm}
\paragraph{Evolutionary dynamics.} For a finite population setting, at each time step, a randomly selected individual A, with fitness $f_A$, may adopt a different strategy by imitating a randomly chosen individual B from the same population (with fitness $f_B$) with probability given by the Fermi distribution \cite{traulsen2006}.
\[
p=[1+e^{-\beta(f_B-f_A)}]^{-1},
\]
where $\beta\geq 0$ is the strength of selection. $\beta = 0$ corresponds to neutral drift where imitation decisions are random, while for large $\beta \rightarrow \infty$, the imitation decision becomes increasingly deterministic.

In the absence of mutations or exploration, the end states of evolution are inevitably monomorphic: once such a state is reached, it cannot be escaped through imitation. We thus further assume that with a certain mutation probability,  an agent switches randomly to a different strategy without imitating another agent.  In the limit of small mutation rates, the dynamics will proceed with, at most, two strategies in the population, such that the behavioural dynamics can be conveniently described by a Markov chain, where each state represents a monomorphic population, whereas the transition probabilities are given by the fixation probability of a single mutant \citep{key:imhof2005,key:novaknature2004,domingos2023egttools}. The resulting Markov chain has a stationary distribution, which characterises the average time the population spends in each of these monomorphic end states.

%An individual \textit{B} from a population that is playing \textit{D} with a fitness of $f_D$ may imitate another randomly selected individual \textit{A} with a payoff of $f_C$ from the same population. The probability of this action occurring is expressed in the equation above \citep{nowak2006evolutionary,Michihro1993Learningmutation}. The transition probability for agent \textit{A} playing \textit{C}, transitioning from state \textit{k} to {$(k{\pm}1)$}, is given by:
% \begin{eqnarray}\label{eq:2.4}
% T^{\pm}(k) =\frac{k}{N}\frac{(N-k)}{N}[1+ e^{\mp \beta(\Pi_C(k)-\Pi_D(k))}]^{-1}.
% % \vspace{-1.8cm}
% \end{eqnarray}

Now, the probability to change the number $k$ of agents using strategy A by $\pm$ one in each time step can be written as ($Z$ is the population size) \citep{traulsen2006}:
\begin{equation} 
T^{\pm}(k) = \frac{Z-k}{Z} \frac{k}{Z} \left[1 + e^{\mp\beta[f_A(k) - f_B(k)]}\right]^{-1}.
\end{equation}
The fixation probability of a single mutant with a strategy A in a population of $(Z-1)$ agents using B is given by \citep{traulsen2006,key:novaknature2004}:
\begin{equation} 
\label{eq:fixprob} 
\rho_{B,A} = \left(1 + \sum_{i = 1}^{Z-1} \prod_{j = 1}^i \frac{T^-(j)}{T^+(j)}\right)^{-1}.
\end{equation} 
%In the limit of neutral selection (i.e. $\beta = 0$), $\rho_{B,A}$ equals the  inverse of population size, $1/N$. 

% System dynamics is approached according to Markov chain\textquotesingle s state which resembles the eight monomorphic states, where an agent in each of the three populations participate in similar strategy. As illustrate in the simplex in \ref{fig:BModelRules}, each state has three potential transitions, with every transition presenting strategic mutation in the form of an all-encompassing behavioral shift for any of the represented populations.
The transition matrix $\Lambda$ corresponding to the set of $\left\{1,\ldots ,s\right\}$ strategies is given by:
\begin{eqnarray}\label{eq:2.6}
\Lambda_{ij,j\neq i}=\frac{\rho_{ji}}{4}\hspace{1mm} \text{ and } \hspace{1mm} \Lambda_{ii}= 1- \sum_{j=1,j\neq i}^s \Lambda_{ij}.
% \vspace{-1.8cm}
\end{eqnarray}
  Fixation probability $\rho_{ij}$ denotes the likelihood that a population transitions from a state $i$ to a different state $j$ when a mutant of one of the populations adopts an alternate strategy \textit{s}. The fixation probability is divided by the number of populations (3) representing the interaction of three players at a time \cite{encarnaccao2016paradigm,alalawi2019pathways}. 
%Risk-dominant
% One of the significant computation to determine the most probable transition acquired from $\rho_{ij}$of which rout to be adopted between strategy \textit{C} and \textit{D} this known as risk dominant strategy.
% Mutation measured by likelihood of agent fixated with strategy \textit{D} in a population to be invaded with agent of \textit{C}.

\subsubsection{Population dynamics for infinite populations: The multi-population replicator dynamics}
\label{subsection:infinitepopulation}
In this section, we recall the framework of the replicator dynamics for multi-populations \cite{taylor1979evolutionarily,bauer2019stabilization}. To describe the dynamics, we consider a set of $m$ different populations ($m$ is some positive integer), which are infinitely large and well-mixed. Each population $i$, $i=1,\ldots m$, consists of $n_i$ ($n_i$ is some positive integer) different strategies (types). Let $x_{ij}, 1\leq i\leq m, 1\leq j\leq n_i$, be the frequency of the strategy $j$ in the population $i$. We denote by $x_i=(x_{ij})_{j=1}^{n_i}$, which is the collection of all strategies in the population $i$, and $x=(x_1,\ldots, x_m)$, which is the collection of all strategies in all populations. 

For each $i\in\{1,\ldots, m\}$ and $j\in\{1,\ldots, n_i\}$, let $f_{ij}(x)$ be the fitness (reproductive rate) of the strategy $j$ in the population $i$. This fitness is obtained when the strategy $j$ interacts with all other strategies in all populations; thus, it depends on all the strategies in the populations. The average fitness of the population $i$ is defined by
\[
\bar{f}_i(x)=\sum_{j=1}^{n_i} x_{ij} f_{ij}(x).
\]
The multi-population replicator dynamics is then given by
\begin{equation}
\label{eq: general replicator dynamics}
\dot{x}_{ij}=x_{ij} (f_{ij}(x)-\bar{f}_{i}(x)), \quad 1\leq i\leq m,\quad 1\leq j\leq n_i. 
\end{equation}
This is in general an ODE system of $\sum_{i=1}^m n_i$ equations. Noting, however that since $\sum_{j=1}^{n_i}x_{ij}=1$ for all $i=1,\ldots, m$, we can reduce the above system to a system of $\sum_{i=1}^m n_i-m$ equations. 

Now we focus on the case when there are two strategies in each population (which is the case for our models of AI governance and trust in the present paper), that is $n_i=2$ for all $i=1,\ldots, n_i$. Let $\eta_i$ be the frequency of the first strategy in the population $i$, $i=1,\ldots, m$ (thus $1-\eta_i$ will be the frequency of the second strategy in the population $i$), let $\eta=(\eta_1,\ldots, \eta_m)$. Let $f_{1i}(\eta)$ and $f_{2i}(\eta)$ be the fitness of the first and second strategy in the population $i$. Since:
\[
\bar{f}_i(\eta)=\eta_i f_{1i}(\eta)+(1-\eta_i) f_{2i}(\eta),
\]
we have:
\[
f_{1i}(\eta)-\bar{f}_i(\eta)=f_{1i}(\eta)-(\eta_i f_{1i}(\eta)+(1-\eta_i) f_{2i}(\eta))=(1-\eta_i)(f_{1i}(\eta)-f_{2i}(\eta)).
\]
Thus we obtain the following system of equations:
\begin{equation}
\label{eq: general replicator2}
\dot{\eta}_i=\eta_i(1-\eta_i)(f_{1i}(\eta)-f_{2i}(\eta)), \quad i=1,\ldots, m.
\end{equation}

This is a system of $m$ coupled nonlinear Ordinary Differential Equations (ODE) for $m$ variables.

In the subsequent sections, we employ \eqref{eq: general replicator2} to our models of AI governance trust, where the fitnesses are computed from the payoff matrix constructed in the models, see Tables~\ref{tab:payoff_matrix_1}-\ref{tab:payoff_matrix_2}. The resulting replicator dynamics for both models, (see equations \eqref{eq: replicator dynamics baseline model} and equations \eqref{eq: replicator dynamics model 2} below) can be written in a general form as:
\begin{subequations}
\label{eq: general system}
\begin{align}
 &\dot{x}=x(1-x) F_1(X)=:\tilde{F}_1(X),\\
&\dot{y}=y(1-y) F_2(X)=:\tilde{F}_2(X),\\
&\dot{z}=z(1-z) F_3(X)=:\tilde{F}_3(X),\\
&\dot{w}=w(1-w) F_4(X)=:\tilde{F}_4(X), 
\end{align}
\end{subequations}
where $X=(x,y,z,w)$ is the vector of frequencies, and $F_i(X)$ ($i=1,\ldots, 4$) are the corresponding difference of the fitnesses in the two models (precise formulas are given in the next section).

The 16 vertices $(x,y,z,w)\in\{0,1\}^4$ of the 4-dimensional cube are obviously equilibria of \eqref{eq: general system} (called vertical equilibria). An internal equilibria $X$ is a solution in $(0,1)^4$ of the following system of equations:
\begin{equation*}
 F_1(X)=F_2(X)=F_3(X)=F_4(X)=0.   
\end{equation*}
Analytically computing these internal equilibria and analysing their stable properties are very complicated due to the nonlinearity and number of the parameters, thus we will do so numerically. Here, we analyze the stability of the vertical equilibria. Let $X^*=(x^*,y^*,z^*, w^*)\in \{0,1\}^4$ be a vertical equilibrium. Since $x^*\in \{0,1\}$, we have:
\begin{align*}
&\frac{\partial}{\partial x} \tilde{F}_1(X)\Big\vert_{X=X^*}=(1-x^*)F_1(X^*)-x^* F_1(X^*)+x^*(1-x^*) \partial_x F_1(X^*)=(1-2x^*)F_1(X^*),\\
&\frac{\partial}{\partial t} \tilde{F}_1(X)\Big\vert_{X=X^*}=x^*(1-x^*) \partial_t F_1(X^*)=0.
\end{align*}
Similar computations yield:
\begin{align*}
 &\frac{\partial}{\partial y} \tilde{F}_2(X)\Big\vert_{X=X^*}=(1-2y^*)F_2(X^*),\\
&\frac{\partial}{\partial t} \tilde{F}_2(X)\Big\vert_{X=X^*}=0,\quad t\in\{x,z,w\},\\
 &\frac{\partial}{\partial z} \tilde{F}_3(X)\Big\vert_{X=X^*}=(1-2z^*)F_3(X^*),\\
&\frac{\partial}{\partial t} \tilde{F}_3(X)\Big\vert_{X=X^*}=0,\quad t\in\{x,y,w\},\\
 &\frac{\partial}{\partial w} \tilde{F}_4(X)\Big\vert_{X=X^*}=(1-2w^*)F_4(X^*),\\
&\frac{\partial}{\partial t} \tilde{F}_4(X)\Big\vert_{X=X^*}=0,\quad t\in\{x,y,z\}.
\end{align*}
From the above calculations, the Jacobian matrix $J(X)=\frac{D \tilde{F}(X)}{D X}$, evaluated at $X^*$, $J(X^*)=\frac{D \tilde{F}(X)}{D X}\vert_{X=X^*}$, will be the following diagonal matrix:
\[
J(X^*)=\mathrm{diag}((1-2x^*)F_1(X^*), (1-2y^*)F_2(X^*),(1-2z^*)F_3(X^*),(1-2w^*)F_4(X^*)).
\]
The diagonal entries are the real eigenvalues of $J(X^*)$. Thus, $X^*$ is stable if and only if the following conditions hold:
\[
(1-2x^*)F_1(X^*)<0,~ (1-2y^*)F_2(X^*)<0,~ (1-2z^*)F_3(X^*)<0,~(1-2w^*)F_4(X^*)<0.
\]
From these conditions, one can easily determine the stability of $X^*$ which depends on the parameters. We will employ this analysis for our models in the next section.
\section{Equilibrium analysis in infinite populations}

\subsubsection{Model I: developers are investigated by media } 
In order to write the replicator dynamics explicitly using the general framework \eqref{eq: general replicator2}, we need to derive the fitness differences for all participants in the game. We use the values from Table \ref{tab:payoff_matrix_1}; for brevity, we omit the explicit calculations. For commentariat and users, the following hold:
\begin{align}
    f^{Co}_C-f^{Co}_D &= \left(y b_i-c_i\right) -\left(y \left(b_i-p_W \left(b_i+c_W\right)\right)\right)\notag\\&= y p_W \left(b_i+c_W\right)-c_i,
\end{align}
\begin{align}
f^U_T-f^U_N&= b_U \left((x-1) p_W ((z-1) \epsilon +z)+z\right) - 0\notag\\&=b_U \left((x-1) p_W ((z-1) \epsilon +z)+z\right).\label{eq: differnce T and N}
\end{align}
Similarly, the difference of the fitness between two strategies in creators is:
\begin{align}
f^C_C-f^C_{D}&= \left(y b_P \left((x-1) p_W+1\right)-c_P\right) - \left((x-1) y p_W \left(u w-b_P\right)\right)\notag\\&=(x-1) y p_W \left(2 b_P-u w\right)+y b_P-c_P \label{eq: difference C and D for developers}
\end{align}
Finally, the difference of the fitness between two strategies in regulators is:
\begin{align}
f^R_{C}-f^R_{D}&= \left(-(x-1) y p_W \left((z-1) \left(v-b_{\text{f0}}\right)+(1-2 z) b_R\right)+y z b_R-c_R\right) \notag\\&- \left(y b_R \left((x-1) (2 z-1) p_W+z\right)\right)\notag\\&=-(x-1) y (z-1) p_W \left(v-b_{\text{f0}}\right)-c_R.
\end{align}

The replicator dynamics immediately becomes:
\begin{subequations}
\label{eq: replicator dynamics baseline model}
\begin{align}
&\dot{x}=x(1-x)\Big[  y p_w \left(b_I+c_W\right)-c_I
\Big],\label{eq:repdyn-a}\\ 
&\dot{y}=y(1-y)\Big[  b_U \left((x-1) p_w ((z-1) \epsilon +z)+z\right)\Big],\\ 
&\dot{z}=z(1-z)\Big[ (x-1) y p_W \left(2 b_P-u w\right)+y b_P-c_P\Big],\\
&\dot{w}=z(1-z)\Big[  -(x-1) y (z-1) p_W \left(v-b_{\text{f0}}\right)-c_R\Big],\\
&(x(0),y(0),z(0),w(0))=(x_0,y_0,z_0,w_0).\label{eq:repdyn-e}
\end{align}
\end{subequations}

First, we investigate the existence and the number of equilibria in the $[0,1]^4$ hypercube of the above system. Clearly, $(x,y,z,w)\in \{0,1\}^4$ will be (vertical) equilibrium points. The full list of  equilibria with one of the variables lying on the boundary consists of 29 isolated non-degenerate equilibrium points and two edges:
 \begin{equation}
 \begin{split}
 &\begin{cases}
     x = 1,\\
     z=0,\\
     w=0.
 \end{cases}, \ \  \begin{cases}
     x = 1,\\
     z=0,\\
     w=1.
 \end{cases}
 \end{split}
 \end{equation}
The two possible candidates for internal equilibria are: 
\begin{small}
    \begin{equation}
   \begin{cases}
       x=  \frac{-\sqrt{-2 (\epsilon -1) c_i c_R p_W \left(v-b_{\text{f0}}\right) \left(b_i+c_W\right)+c_i^2 \left(v-b_{\text{f0}}\right){}^2+(\epsilon +1)^2 c_R^2 p_W^2 \left(b_i+c_W\right){}^2}+c_i \left(2 p_W-1\right) \left(v-b_{\text{f0}}\right)+(\epsilon +1) c_R p_W \left(b_i+c_W\right)}{2 c_i p_W \left(v-b_{\text{f0}}\right)}\\
       y= \frac{c_i}{p_W \left(b_i+c_W\right)}\\
       z= \frac{\sqrt{-2 (\epsilon -1) c_i c_R p_W \left(v-b_{\text{f0}}\right) \left(b_i+c_W\right)+c_i^2 \left(v-b_{\text{f0}}\right){}^2+(\epsilon +1)^2 c_R^2 p_W^2 \left(b_i+c_W\right){}^2}+c_i \left(v-b_{\text{f0}}\right)+(\epsilon +1) c_R p_W \left(b_i+c_W\right)}{2 c_i \left(v-b_{\text{f0}}\right)}\\
       
       w= -\frac{b_P \left(\sqrt{-2 (\epsilon -1) c_i c_R p_W \left(v-b_{\text{f0}}\right) \left(b_i+c_W\right)+c_i^2 \left(v-b_{\text{f0}}\right){}^2+(\epsilon +1)^2 c_R^2 p_W^2 \left(b_i+c_W\right){}^2}+c_i \left(b_{\text{f0}}-v\right)\right)}{2 u c_Rp_W \left(b_i+c_W\right)}\\ \ \ \ +\frac{c_P \left(\sqrt{-2 (\epsilon -1) c_i c_R p_W \left(v-b_{\text{f0}}\right) \left(b_i+c_W\right)+c_i^2 \left(v-b_{\text{f0}}\right){}^2+(\epsilon +1)^2 c_R^2 p_W^2 \left(b_i+c_W\right){}^2}+c_i \left(b_{\text{f0}}-v\right)+(\epsilon +1) c_R p_W \left(b_i+c_W\right)\right)}{c_i 2 u c_R}\\
       \ \ \ +\frac{(\epsilon -3) b_P c_R}{2 u c_R}
   \end{cases}
    \end{equation}
\end{small}
and
\begin{small}
    \begin{equation}
        \begin{cases}
          x =   \frac{\sqrt{-2 (\epsilon -1) c_i c_R p_W \left(v-b_{\text{f0}}\right) \left(b_i+c_W\right)+c_i^2 \left(v-b_{\text{f0}}\right){}^2+(\epsilon +1)^2 c_R^2 p_W^2 \left(b_i+c_W\right){}^2}+c_i \left(2 p_W-1\right) \left(v-b_{\text{f0}}\right)+(\epsilon +1) c_R p_W \left(b_i+c_W\right)}{2 c_i p_W \left(v-b_{\text{f0}}\right)},\\
          
          y= \frac{c_i}{p_W \left(b_i+c_W\right)},\\
          z= \frac{-\sqrt{-2 (\epsilon -1) c_i c_R p_W \left(v-b_{\text{f0}}\right) \left(b_i+c_W\right)+c_i^2 \left(v-b_{\text{f0}}\right){}^2+(\epsilon +1)^2 c_R^2 p_W^2 \left(b_i+c_W\right){}^2}+c_i \left(v-b_{\text{f0}}\right)+(\epsilon +1) c_R p_W \left(b_i+c_W\right)}{2 c_i \left(v-b_{\text{f0}}\right)},\\
          
          w =  \frac{b_P \left(\sqrt{-2 (\epsilon -1) c_i c_R p_W \left(v-b_{\text{f0}}\right) \left(b_i+c_W\right)+c_i^2 \left(v-b_{\text{f0}}\right){}^2+(\epsilon +1)^2 c_R^2 p_W^2 \left(b_i+c_W\right){}^2}+c_i \left(v-b_{\text{f0}}\right)\right)}{2 u c_Rp_W \left(b_i+c_W\right)}\\ \ \ \ +\frac{c_P \left(-\sqrt{-2 (\epsilon -1) c_i c_R p_W \left(v-b_{\text{f0}}\right) \left(b_i+c_W\right)+c_i^2 \left(v-b_{\text{f0}}\right){}^2+(\epsilon +1)^2 c_R^2 p_W^2 \left(b_i+c_W\right){}^2}+c_i \left(b_{\text{f0}}-v\right)+(\epsilon +1) c_R p_W \left(b_i+c_W\right)\right)}{2 u c_Rc_i}\\ \ \ \ +\frac{(3-\epsilon ) b_P c_R}{2 u c_R}.
        \end{cases}
    \end{equation}
\end{small}
Analytically determining whether they are indeed internal equilibria, that is, whether they lie inside the hypercube $(0,1)^4$ is intractable. We thus invoke numerical analysis. Figures \ref{fig:barcharta} and \ref{fig:barchartb} show the distribution of the number of solutions for randomly chosen values of the parameters, for both models I and II. Nevertheless, we obtain the following interesting result that provides simple conditions on the non-existence of internal equilibria.
\begin{lemma}
 There are no internal equilibria in $[0,1]^4$ when either of the following hold:
 \begin{enumerate}
     \item $v-b_{f_0}>0$,
     \item $0<\epsilon<1$. 
 \end{enumerate}
\end{lemma}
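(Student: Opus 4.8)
The plan is to avoid solving the full four-equation system $F_1=F_2=F_3=F_4=0$ and instead show that, in each case, a \emph{single} one of these equations already fails to admit a solution with all coordinates strictly inside the cube. Throughout I would use the standing sign conventions of the model: $p_W\in(0,1)$, $c_R>0$, $c_I>0$, $b_U>0$, and, for a putative internal equilibrium, $x,y,z,w\in(0,1)$, so that $x-1<0$, $z-1<0$, $y>0$, and hence $(x-1)(z-1)>0$. The whole argument is then a matter of sign bookkeeping on the explicit fitness differences in \eqref{eq: replicator dynamics baseline model}.

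For the first condition, $v-b_{f_0}>0$, I would read off the regulator component $F_4(X)=-(x-1)y(z-1)p_W(v-b_{f_0})-c_R$. Setting $F_4=0$ and rearranging gives
\[
-(x-1)\,y\,(z-1)\,p_W\,(v-b_{f_0})=c_R .
\]
On the left, $(x-1)(z-1)>0$, $y>0$, $p_W>0$, and by assumption $v-b_{f_0}>0$, so the product $(x-1)y(z-1)p_W(v-b_{f_0})$ is strictly positive and its negation is strictly negative, while the right-hand side is $c_R>0$. This is a direct sign contradiction, so $F_4$ cannot vanish in $(0,1)^4$ and no internal equilibrium exists.

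For the second condition, $0<\epsilon<1$, I would instead exploit the user component $F_2(X)=b_U\big[(x-1)p_W((z-1)\epsilon+z)+z\big]$. Dividing $F_2=0$ by $b_U>0$ and writing $\alpha:=(1-x)p_W\in(0,1)$, the relation is linear in $z$ and solves to
\[
z=\frac{\alpha\epsilon}{\alpha(1+\epsilon)-1}.
\]
I would then impose $z\in(0,1)$. Since $\alpha\epsilon>0$, positivity of $z$ forces the denominator to be positive, i.e. $\alpha>\tfrac{1}{1+\epsilon}$; but then clearing the (now positive) denominator in $z<1$ reduces to $\alpha>1$, which is impossible because $\alpha=(1-x)p_W<1$. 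The degenerate case $\alpha=\tfrac{1}{1+\epsilon}$ makes the denominator vanish while the numerator $\alpha\epsilon>0$ remains nonzero, so $F_2=0$ has no solution there either. Hence no admissible $z$ exists and, again, there is no internal equilibrium.

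The computations are elementary, so I do not anticipate a genuine analytic obstacle; the only real work is structural. The step I would be most careful about is selecting the right single equation in each case ($F_4$ for condition~1, $F_2$ for condition~2) together with a clean accounting of the model's implicit sign assumptions ($p_W\in(0,1)$, $c_R,b_U>0$) and the strict interior constraints $x,z\in(0,1)$. I would also verify the degenerate and boundary subcases (vanishing denominators, and $p_W$ or $1-x$ at their extreme values) explicitly, so that the contradiction is airtight rather than merely generic.
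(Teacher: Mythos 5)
Your proposal is correct and follows essentially the same route as the paper: isolate the regulator equation $F_4=0$ for condition~1 and the user equation $F_2=0$ for condition~2, and derive a sign contradiction in each case (the paper first solves $F_1=0$ for $y=c_I/(p_W(b_I+c_W))$ and substitutes, but since $F_4$ only needs $y>0$ and $F_2$ does not involve $y$ at all, your shortcut loses nothing). One small point in your favour: for condition~2 the paper simply asserts that $z=-\alpha\epsilon/(1-\alpha(1+\epsilon))$ is negative, which holds only when the denominator is positive; when $\alpha(1+\epsilon)>1$ the expression is instead greater than $1$, and your case split on the sign of the denominator (together with the degenerate case $\alpha(1+\epsilon)=1$) is what actually closes the argument that $z\notin(0,1)$ in all cases.
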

\begin{proof}

Internal equilibria are given by the solutions of the last parts of the equations \ref{eq:repdyn-a}-\ref{eq:repdyn-e}. 

From the first equation it is clear that $y= \frac{c_i}{p_W \left(b_i+c_W\right)}$. Substituting this value into the fourth equation yields:
\begin{equation*}
    -\frac{(x-1) (z-1) c_i \left(v-b_{\text{f0}}\right)}{b_i+c_W}-c_R=0.
\end{equation*}

Since $c_i,c_R>0$ and $0<x<1$, $0<z<1$, if $v-b_{f_0}>0$, the whole expression turns strictly negative and can't result in an equilibrium.   

Analogously, substituting the internal equilibrium value of $y$ into the second equation and solving it for $z$ gives:
\begin{equation*}
    z =-\frac{(1-x) \epsilon  p_W}{1-(1-x) (\epsilon +1) p_W},
\end{equation*}
which is negative when $0<\epsilon<1$. 
\end{proof}

\subsubsection{Model I -- Stability analysis }
%The functions $F_i$ ($i=1,\ldots, 4)$ become
%\begin{align*}
%F_1(X)&=y p_w \left(b_I+c_w\right)-c_I,  \\
%F_2(X)&=b_U \left((x-1) p_w ((z-1) \epsilon +z)+z\right),\\
%F_3(X)&=y b_P \left(2 (x-1) p_w+1\right)+c_P ((w+1) (x-1) y-x-1)-(x-1) y \left(u w p_w-w+1\right),\\
%F_4(X)&=-(x-1) y (z-1) p_w \left(v-b_{{f_0}}\right)-c_r.
%\end{align*}
%The state $(C,T,C,C)$, that is $X^*=(1,1,1,1)$ is stable if and only if
%\[
%F_1(X^*)>0,~ F_2(X^*)>0,~F_3(X^*)>0,~ F_4(X^*)>0,
%\]
%that is \textcolor{red}{there seems some miss-match with the 1st case?}
%\begin{align*}
%p_w(b_I+c_w)-c_I>0,~b_U>0,~b_p-2c_p>0, ~-c_r>0   
%\end{align*}

Due to the form of the equations, as it has been proved in section \ref{subsection:infinitepopulation}, the Jacobian matrix (as described there) will be diagonal at the vertices of the hypercube. Therefore, the stability of a vertical equilibrium will be determined by the values on the diagonal of the matrix, which are shown in Table \ref{tab:payoff_matrix_2}. Recall that the points with four positive nonzero eigenvalues are unstable, four nonzero negative are stable; the remaining are saddles.

Additionally, note the presence of degenerate equilibrium points in the table -- this is to be expected since equilibria occupy edges of the cube. 
The following hold:
\begin{itemize}
    \item $X^{\ast} = (0,0,0,0)$ is stable if and only if $\epsilon<0$;
    \item $X^{\ast} = (0,1,0,0)$ is stable if and only if $p_w(b_i  + c_w)-c_i<0$, $-2b_Pp_W + b_p-c_p<0$ and $-p_W(v-b_{f_0})-c_R<0$;
    \item $X^{\ast} = (0,1,0,1)$  can be made either stable or a saddle or unstable by regulating the values of the parameters; this is the only vertex point with this property;
    \item $X^{\ast} = (1,1,0,1)$ is stable if and only is $c_i - p_w(b_i+c_W)<0$, $c_p-b_P<0$.
    
\end{itemize}

\begin{table}[ht]
\label{table:table6}
    \centering
        \caption{Vertex fixed points and eigenvalues Model I}
    \begin{tabular}{c|c|c|c|c}
    \hline
        $X^*$ & $\lambda_1$ & $\lambda_2$ & $\lambda_3$ & $\lambda_4$ \\
        \toprule
        (0,0,0,0) & $-c_i$&$\epsilon  b_U p_W$&$-c_P$&$-c_R$\\
        \hline
         (0,0,0,1)& $-c_i$ &$\epsilon  b_U p_W$ &$-c_P$ &$c_R$\\
          \hline 
          (0,0,1,0) &$-c_i$ &$b_U \left(1-p_W\right)$ &$b_{f_o}+b_R-c_R-v - b_{f_o}p_w - 2 b_R p_w + v p_w$ &$-c_R$\\
          \hline
         (0,1,0,0) &$p_W \left(b_i+c_W\right)-c_i$ &$-\epsilon  b_U p_W$ &$-2 b_P p_W+b_P-c_P$ &$-p_W \left(v-b_{\text{f0}}\right)-c_R$\\
          \hline  
          (1,0,0,0) &$c_i$ &$0$ &$-c_P$ &$-c_R$\\
          \hline
         (1,1,0,0) &$c_i-p_W \left(b_i+c_W\right)$ &$0$ &$b_P-c_P$ &$-c_R$\\
          \hline 
         (1,0,1,0) &$c_i$ &$b_U$ &$c_P$ &$-c_R$\\
          \hline 
         (1,0,0,1) &$c_i$ &0&$-c_P$ &$c_R$ \\
          \hline 
         (0,1,0,1)& $p_W \left(b_i+c_W\right)-c_i$&$-\epsilon  b_U p_W$&$-p_W \left(2 b_P-u\right)+b_P-c_P$&$p_W \left(v-b_{\text{f0}}\right)+c_R$\\
  \hline
         (0,1,1,0) &$ p_W \left(b_i+c_W\right)-c_i$&$-b_U \left(1-p_W\right)$&$2 b_P p_W-b_P+c_P$&$-c_R$\\
          \hline 
         (0,0,1,1) &$ -c_i$&$b_U \left(1-p_W\right)$&$c_P$&$c_R$\\
          \hline
         (1,1,1,0)&$c_i-p_W \left(b_i+c_W\right)$&$-b_U$&$c_P-b_P$&$-c_R$\\ 
          \hline
         (1,1,0,1) &$c_i-p_W \left(b_i+c_W\right)$&0&$,b_P-c_P$&$c_R$\\
          \hline 
         (1,0,1,1) &$c_i$&$b_U$&$c_P$&$c_R$\\
          \hline
         (0,1,1,1) &$ p_W \left(b_i+c_W\right)-c_i$&$-b_U \left(1-p_W\right)$&$p_W \left(2 b_P-u\right)-b_P+c_P$&$c_R$\\
          \hline 
         (1,1,1,1) &$c_i-p_W \left(b_i+c_W\right)$&$-b_U$&$c_P-b_P$&$c_R$\\
         \hline
    \end{tabular}
    \label{tab:VerticalEquilibria_ModelI}
\end{table}
\subsubsection{Model II:  regulators  are investigated by media}
The replicator dynamics for Model II is constructed completely analogously to the previous case, based on the values from the Table \ref{tab:payoff_matrix_2}. The equations now read:
\begin{subequations}
\label{eq: replicator dynamics model 2}
\begin{align}
&\dot{x}=x(1-x)(f^{Co}_C-f^{Co}_D)=x(1-x)\left[-c_I + y(b_I + c_w)p_w\right],\\ 
&\dot{y}=y(1-y)(f^U_T-f^U_N)=y(1-y)\left[-b_U(\epsilon(-1 + z) - z)(w + (-1 + 2w)(-1 + x) p_w)\right],\\ 
&\dot{z}=z(1-z)(f^C_C-f^C_D)=z(1-z)\left[-c_P + uwy + uw(-1 + x)yp_wy\right],\\
&\dot{w}=z(1-z)(f^R_C-f^R_D)=z(1-z)\bigl[-c_R + y(b_{f_o} + b_R + v(-1 + z) - b_{f_o}z)\\&\ \ \  + (-1 + x)y(b_{f_o} + 2b_R + v(-1 + z) - b_{f_o}z)p_w\bigr],\\
&(x(0),y(0),z(0),w(0))=(x_0,y_0,z_0,w_0),
\end{align}
\end{subequations}
where $(x_0,y_0,z_0,w_0)\in [0,1]^4$ is the initial data.

Solving analytically gives 27 isolated equilibria and again two edges of degenerate equilibria:
\begin{equation}
    \begin{cases}
        x = 1,\\
        z = 0,\\
        w = 0
    \end{cases}, \ \begin{cases}
        x = 1,\\
        z = 1,\\
        w = 0,
    \end{cases}
\end{equation}
together with 2 possible internal equilibria. These equilbira sometimes lie in the hypercube; however, the probability of that occurring is fairly low (see Figure \ref{fig:barchartb}).

\subsubsection{Model II: - Stability analysis} 

\begin{table}[ht]
    \centering
        \caption{Vertex fixed points and eigenvalues Model II}
    \begin{tabular}{c|c|c|c|c}
    \hline
        $X^*$ & $\lambda_1$ & $\lambda_2$ & $\lambda_3$ & $\lambda_4$ \\
        \toprule
        (0,0,0,0) & $-c_I$ &$-c_P$ &$-c_R$ &$b_U\epsilon p_w$\\
        \hline
         (0,0,0,1)& $-c_I$ &$-c_P$ &$c_R$ &$b_U\epsilon (1-p_w)$\\
          \hline
         (0,0,1,0) &$-c_I$ &$c_P$ &$-c_R$ &$b_Up_w$\\
          \hline
         (0,1,0,0) &$-c_P$ &$b_U\epsilon p_w$ &$b_{f_o}+b_R-c_R-v - b_{f_o}p_w - 2 b_R p_w + v p_w$ &$-c_I + b_I p_w + c_w p_w$\\
          \hline
         (1,0,0,0) &0 &$c_I$ &$-c_P$ &$-c_R$\\
          \hline
         (1,1,0,0) &0 &$-c_P$ &$b_{f_o}+b_R-c_R-v$ &$c_I-b_Ip_w-c_wp_w$\\
          \hline
         (1,0,1,0) &0 &$c_I$ &$c_P$ &$-c_R$\\
          \hline
         (1,0,0,1) &$c_I$ &$-c_P$ &$c_R$ &$b_U\epsilon$\\
          \hline
         (0,1,0,1) &$-b_U\epsilon (1 - p_w)$ & $-c_P + u - 
 u p_w$ &$ -b_{f_o} - b_R + c_R + v + b_{f_o} p_w + 
 2 b_R p_w - v p_w$ &$ -c_I + b_Ip_w + c_w p_w$\\
  \hline
         (0,1,1,0) &$c_P$ &$-b_Up_w$ &$b_R-c_R-2b_Rp_w$ &$-c_I+b_Ip_w+c_wp_w$\\
          \hline
         (0,0,1,1) &$-c_I$ &$c_P$ &$c_R$ &$-b_U(1-p_w)$\\
          \hline
         (1,1,1,0) &$0$ &$c_P$ & $b_R - c_R$ & $c_I - b_Ip_w - c_w p_w$\\
          \hline
         (1,1,0,1) &$-b_U\epsilon$ &$-c_P+u$ &$-b_{f_o} - b_R + c_R + v$ & $c_I - b_I p_w - c_w p_w$\\
          \hline
         (1,0,1,1) &$b_U$ & $c_I$ &$c_P$ &$c_R$\\
          \hline
         (0,1,1,1) &$b_U (-1 + p_w)$ &$-b_R + c_R + 2 b_R p_w$ &$c_P - u + u p_w$ & $-c_I + b_I p_w + c_w p_w$\\
          \hline
         (1,1,1,1) &$-b_U$ & $-b_R+c_R$ &$ c_P-u$ &$c_I - b_I p_w - c_w p_w$\\
         %\bottomrule
         \hline
    \end{tabular}
    \label{tab:VerticalEquilibria_ModelII}
\end{table}
We now determine the stability of a vertical equilibria $X^*\in\{0,1\}^4$ of the second model. Based on the eigenvalues of $X^*$, table \ref{tab:VerticalEquilibria_ModelII}, we obtain the following stable equilibria and the conditions:
\begin{itemize}
    \item $X^*=(0,0,0,0)$ is stable if and only if $\epsilon<0$.
    \item $X^*=(0,1,0,0)$ is stable if and only if:
    \[\epsilon>0, ~ b_{f_o}+b_R-c_R-v - b_{f_o}p_w - 2 b_R p_w + v p_w<0, ~ -c_I+b_Ip_w+c_wp_w<0.\]
    \item $X^*=(0,1,0,1)$ is stable if and only if:
    \[\epsilon>0,~ -c_P+u(1-p_w)<0, ~  -b_{f_o} - b_R + c_R + v + b_{f_o} p_w + 2 b_R p_w - v p_w<0, ~-c_I + b_Ip_w + c_w p_w<0. \]
     \item $X^*=(1,1,0,1)$ is stable if and only if:
     \[\epsilon>0, ~ -c_P+u<0, ~ -b_{f_o} - b_R + c_R + v<0,~c_I - b_I p_w - c_w p_w<0.\]
    \item $X^*=(1,1,1,1)$ is stable if and only if:
    \[-b_R+c_R<0,~c_P-u<0,~c_I-p_w(b_I+c_w)<0.\]
\end{itemize}
\subsubsection{Numerical results}
In this section, we numerically solve the replicator dynamics for models I and II, which are sets of ordinary differential equations given in \eqref{eq: replicator dynamics baseline model} and \eqref{eq: replicator dynamics model 2} respectively. The graphs of the solutions are represented in Figures \ref{fig:NumericalIntegration1} and \ref{fig:NumericalIntegration2}. 

For the chosen values of the parameters the equations  converge to a vertical equilibrium (one may observe additionally that the convergence is faster for Model I in Figure \ref{fig:NumericalIntegration1} and, in Figure \ref{fig:NumericalIntegration2}, faster for Model II). 

The behaviour in terms of convergence is similar for the two models as well, barring the cases with $c_I = 0.5, b_I = 1, c_R=5$ and $c_I = 0.5, b_I = 5, c_R=5$.

%\textbf{Conclusion:} A key general finding from our analysis of the three models is that, the desirable  equilibrium point $(1,1,1)$ can be  stable only  in the model with conditional trust. Moreover, the cost of regulation needs to be sufficiently small ($c_R < b_R$) and the impact of punishment on unsafe developers need to be sufficiently high ($u > c_P$). 

% \begin{figure}
%     \centering
%     \includegraphics[width=\linewidth]{}
%     \caption{EXAMPLE OF FIGURES FOR INFINITE POPULATIONS--3 DIMENSIONS CORRESPOND TO C-FREQUENCY IN EACH POPULATION [to delete]}
%     \label{fig:enter-label}
% \end{figure}

\begin{figure}[h]
    \centering
    \subfigure{
        \includegraphics[width=0.35\linewidth]{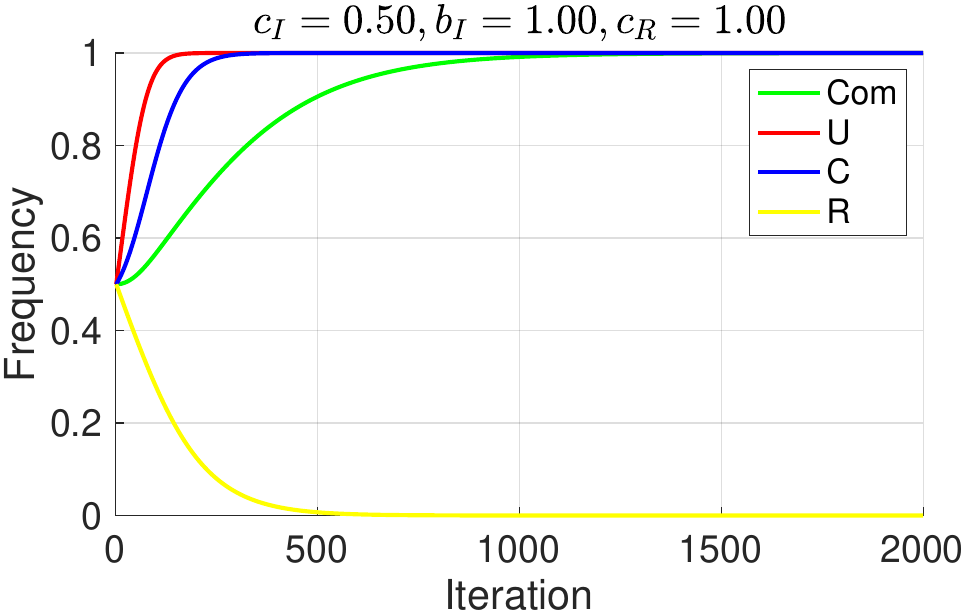} 
    }
    \subfigure{
        \includegraphics[width=0.35\linewidth]{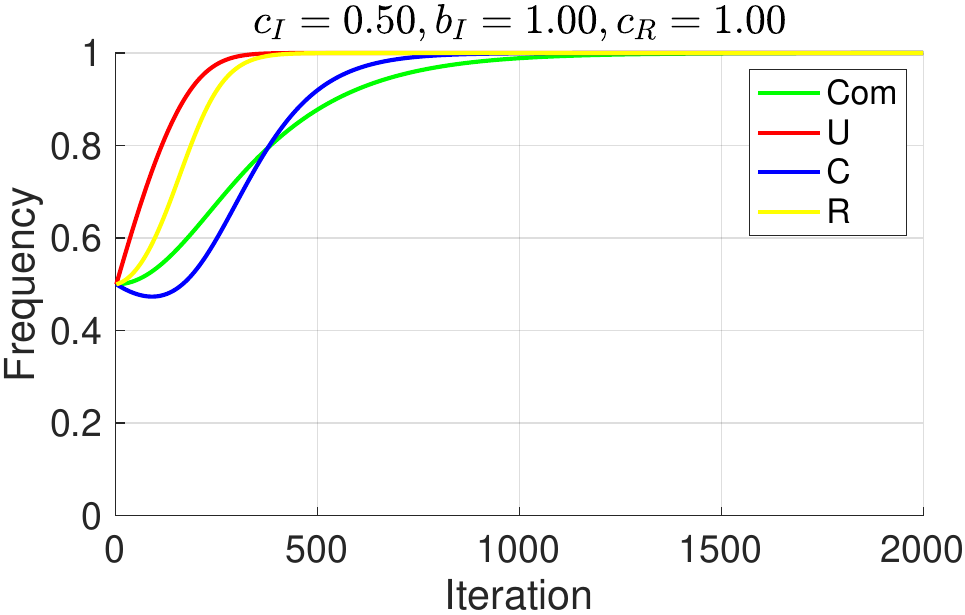} 
    }
    \subfigure{
        \includegraphics[width=0.35\linewidth]{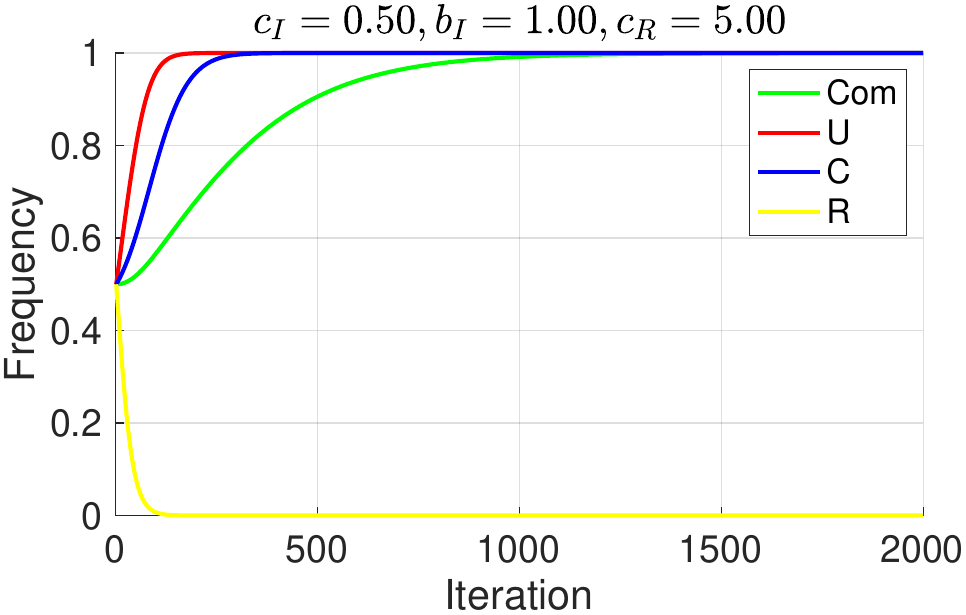}
    }
    \subfigure{
        \includegraphics[width=0.35\linewidth]{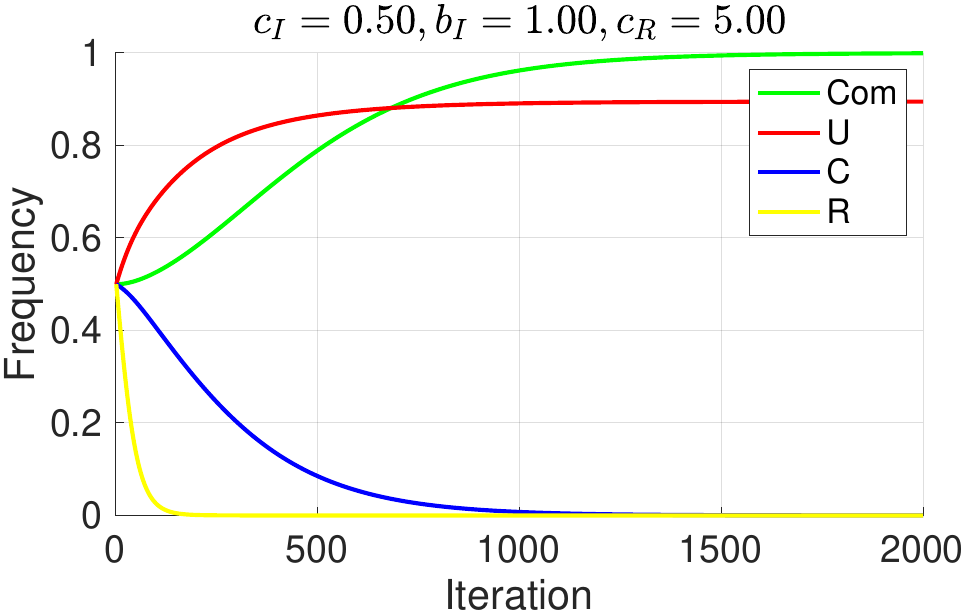}
    }
     \subfigure{
        \includegraphics[width=0.35\linewidth]{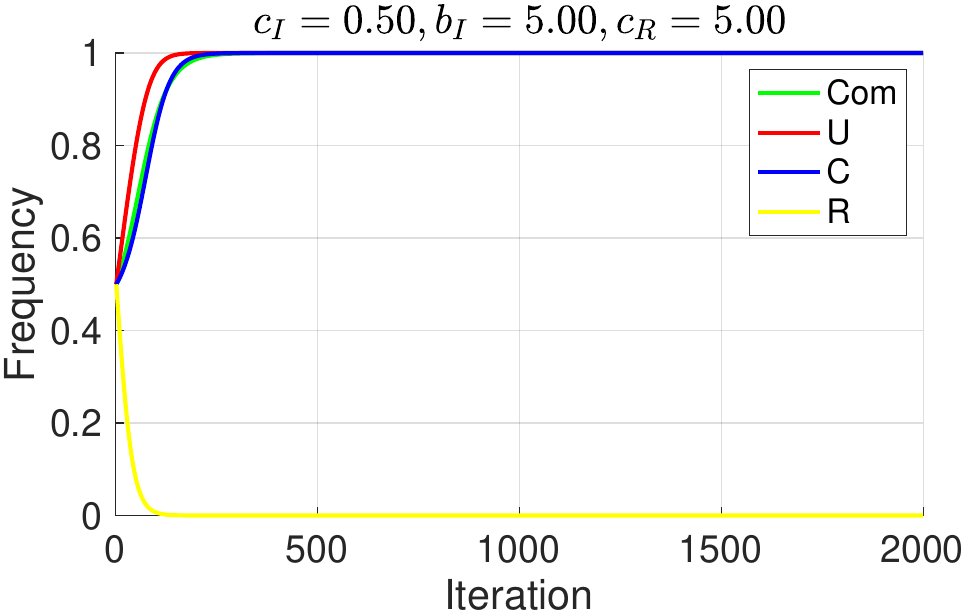}
    }
    \subfigure{
        \includegraphics[width=0.35\linewidth]{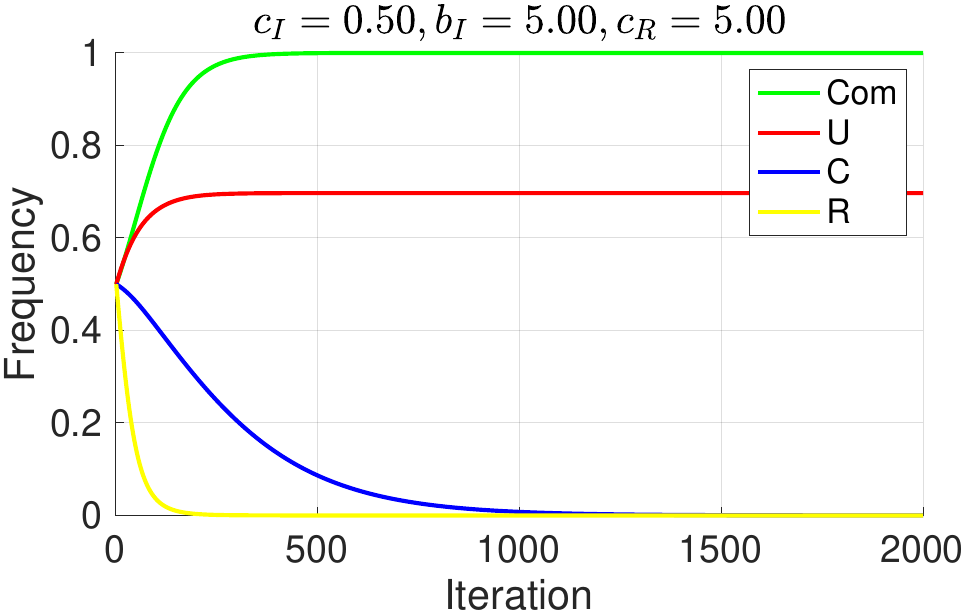}
    }
     \subfigure{
        \includegraphics[width=0.35\linewidth]{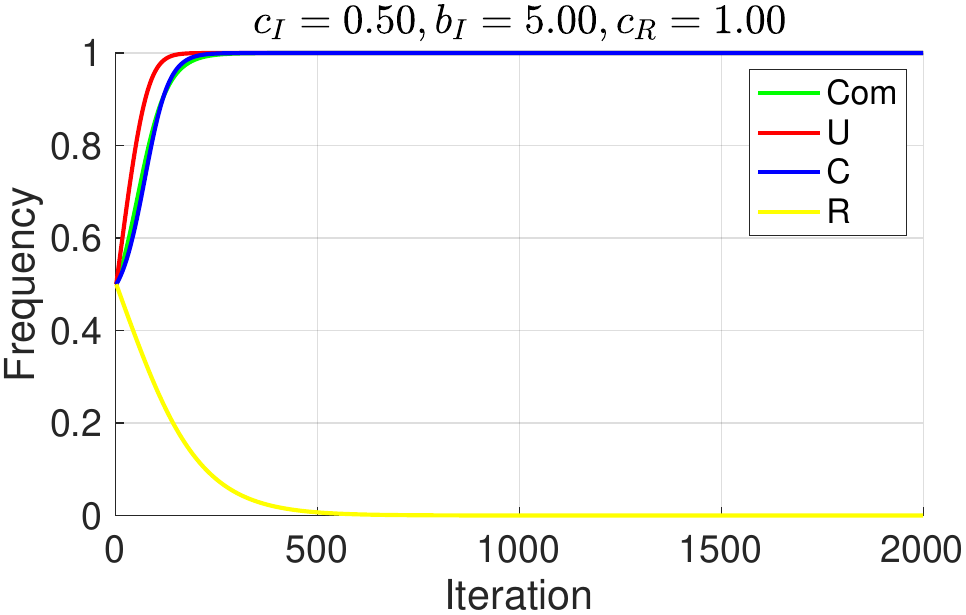}
    }
    \subfigure{
        \includegraphics[width=0.35\linewidth]{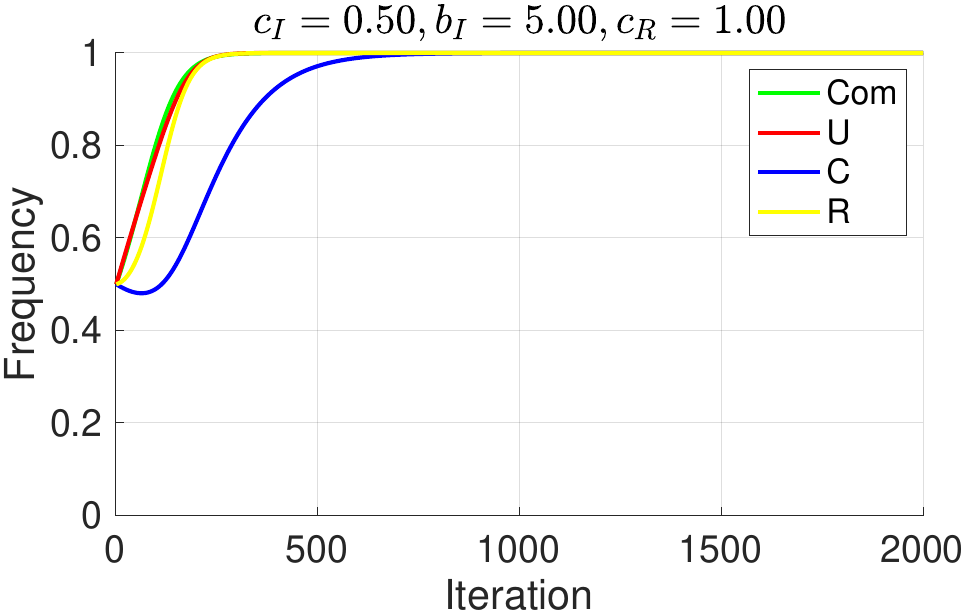}
    }
    \caption{Numerical integration of the evolution equation for two models \textbf{when the media investigation cost is low ($c_I=0.5$)}. The left column shows the results of Model I, and the right column shows the results of Model II. Parameters are set as $b_U = 4, b_P = 4, b_R = 4, c_P = 0.5, c_w = 1, u = 1.5, v = 0.5, b_{f_o} = 1, \epsilon = 0.2, p_w = 0.5$.}
    \label{fig:NumericalIntegration1}
\end{figure}

\begin{figure}[h]
    \centering
    \subfigure{
        \includegraphics[width=0.35\linewidth]{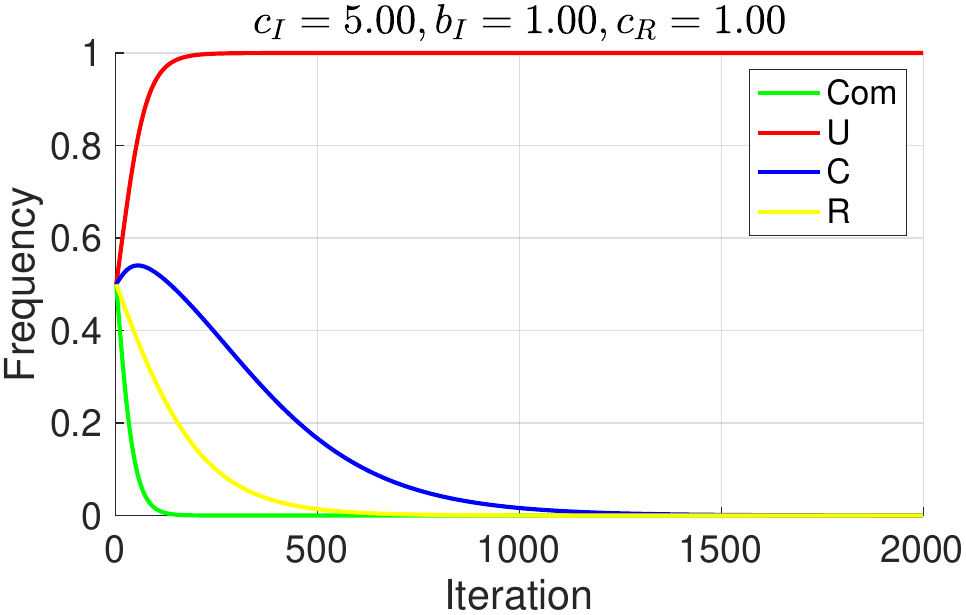} 
    }
    \subfigure{
        \includegraphics[width=0.35\linewidth]{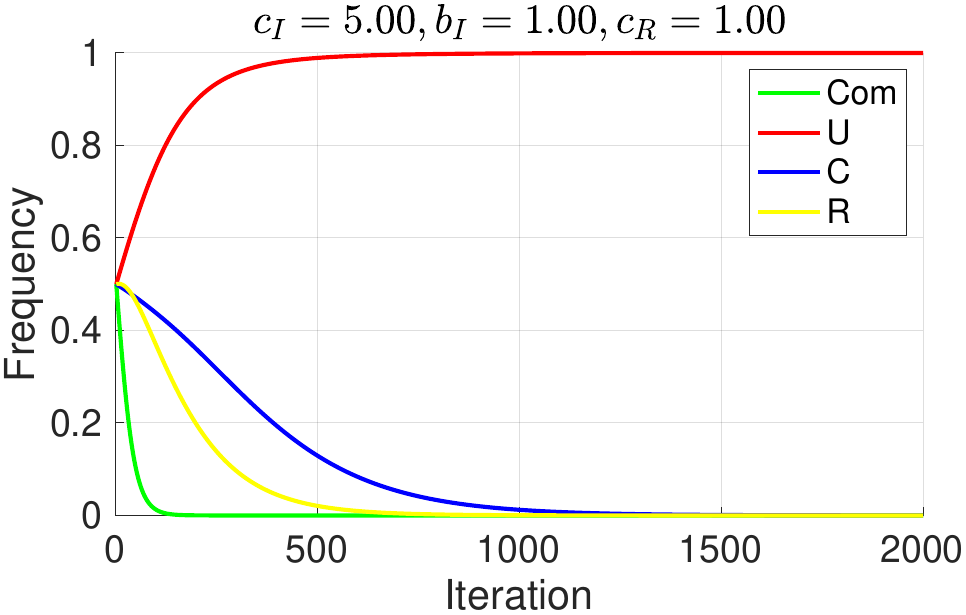} 
    }
    \subfigure{
        \includegraphics[width=0.35\linewidth]{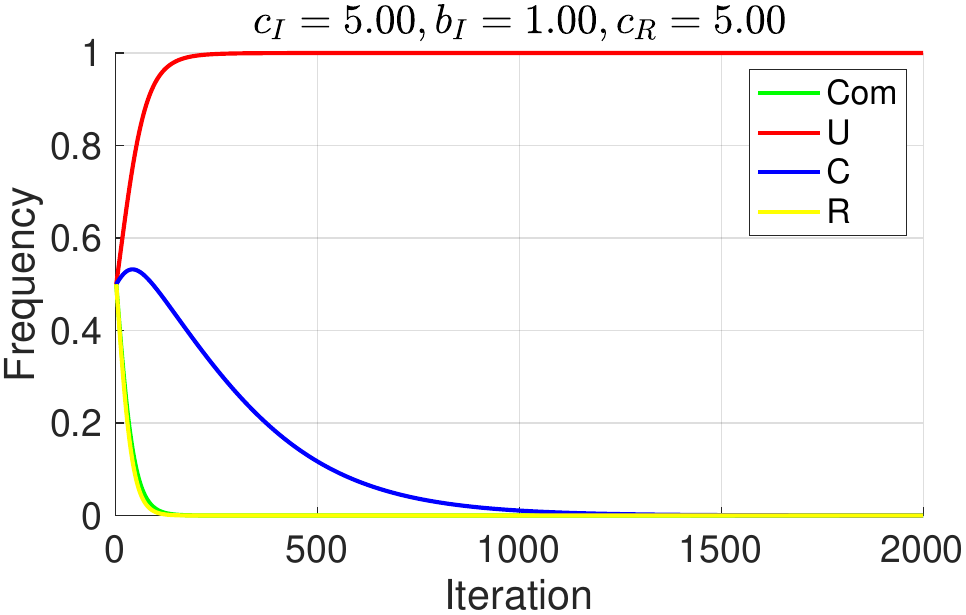}
    }
    \subfigure{
        \includegraphics[width=0.35\linewidth]{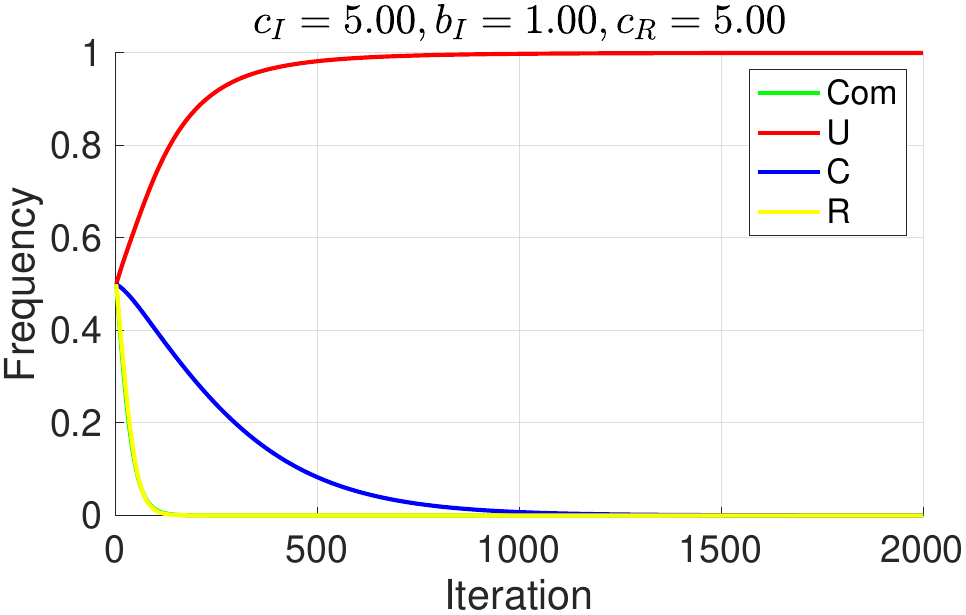}
    }
     \subfigure{
        \includegraphics[width=0.35\linewidth]{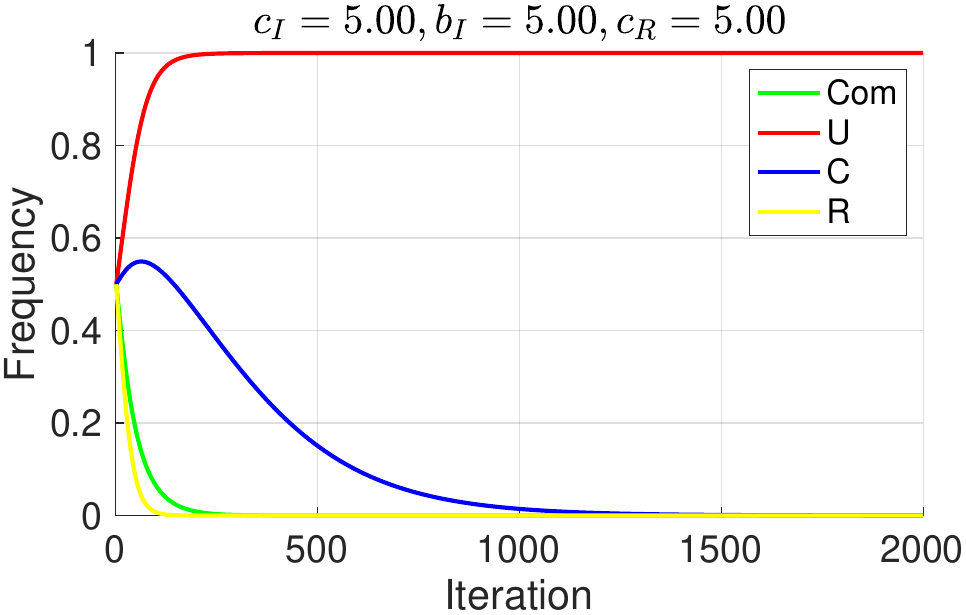}
    }
    \subfigure{
        \includegraphics[width=0.35\linewidth]{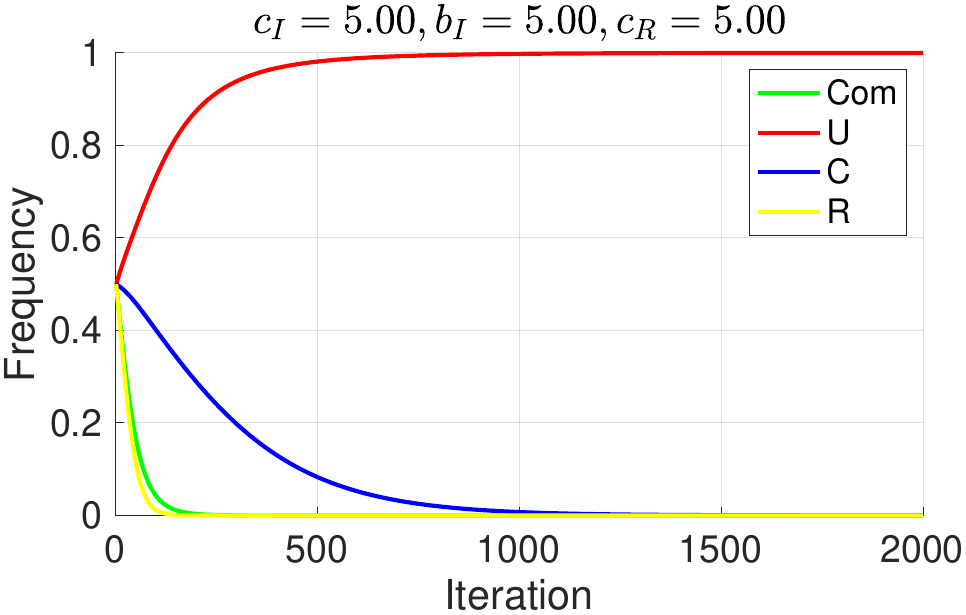}
    }
     \subfigure{
        \includegraphics[width=0.35\linewidth]{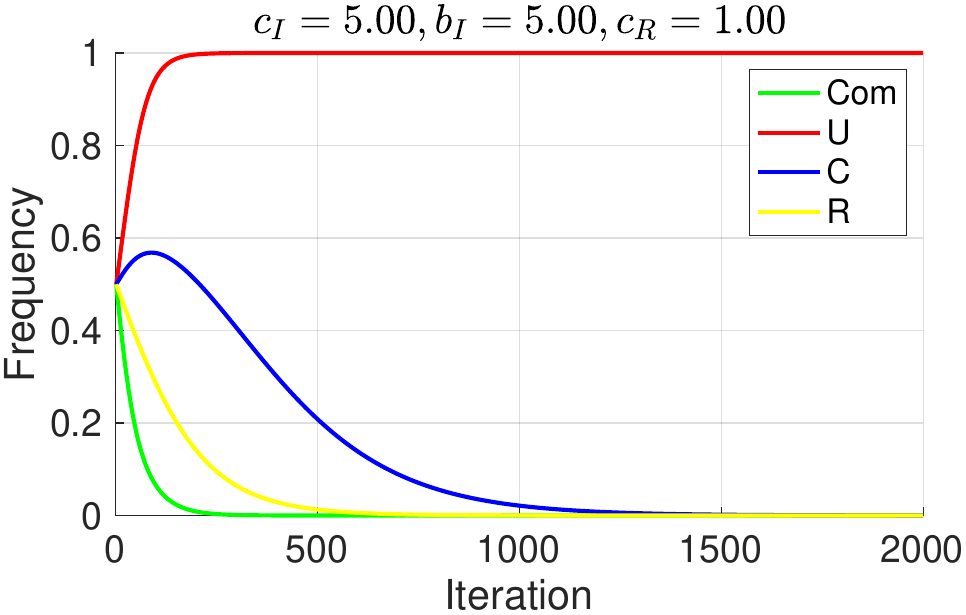}
    }
    \subfigure{
        \includegraphics[width=0.35\linewidth]{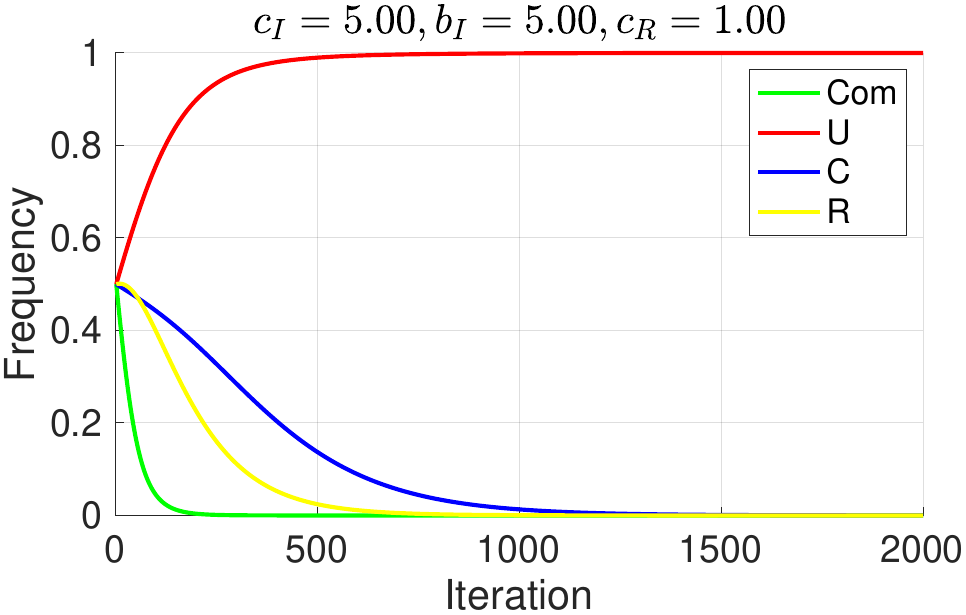}
    }
    \caption{Numerical integration of the evolution equation for two models \textbf{when the media investigation cost is high ($c_I=5.0$)}. The left column shows the results of Model I, and the right column shows the results of Model II. Parameters are set as  $b_U = 4, b_P = 4, b_R = 4, c_P = 0.5, c_w = 1, u = 1.5, v = 0.5, b_{f_o} = 1, \epsilon = 0.2, p_w = 0.5$.}
    \label{fig:NumericalIntegration2}
\end{figure}

\section{Findings from models analysis}
%[TO MAKE THE WHOLE SECTION COHERENT!!!]
We 
study evolutionary game dynamics in finite populations  (see Methods, Section \ref{subsection:finitepopulation}). 
We also present here numerical results for the infinite population setting, validating the analytical observations shown above. 
Compared to  traditional concepts of evolutionary stability and dynamics of infinite populations, stochastic effects in finite population dynamics, including errors in social learning, can have dramatic effects on evolutionary outcomes \citep{nowak2004emergence,rand2013evolution,zisis2015generosity}. 

\subsection{Objective media}

\begin{figure*}
% \begin{center}
\includegraphics[width=0.8\textwidth]{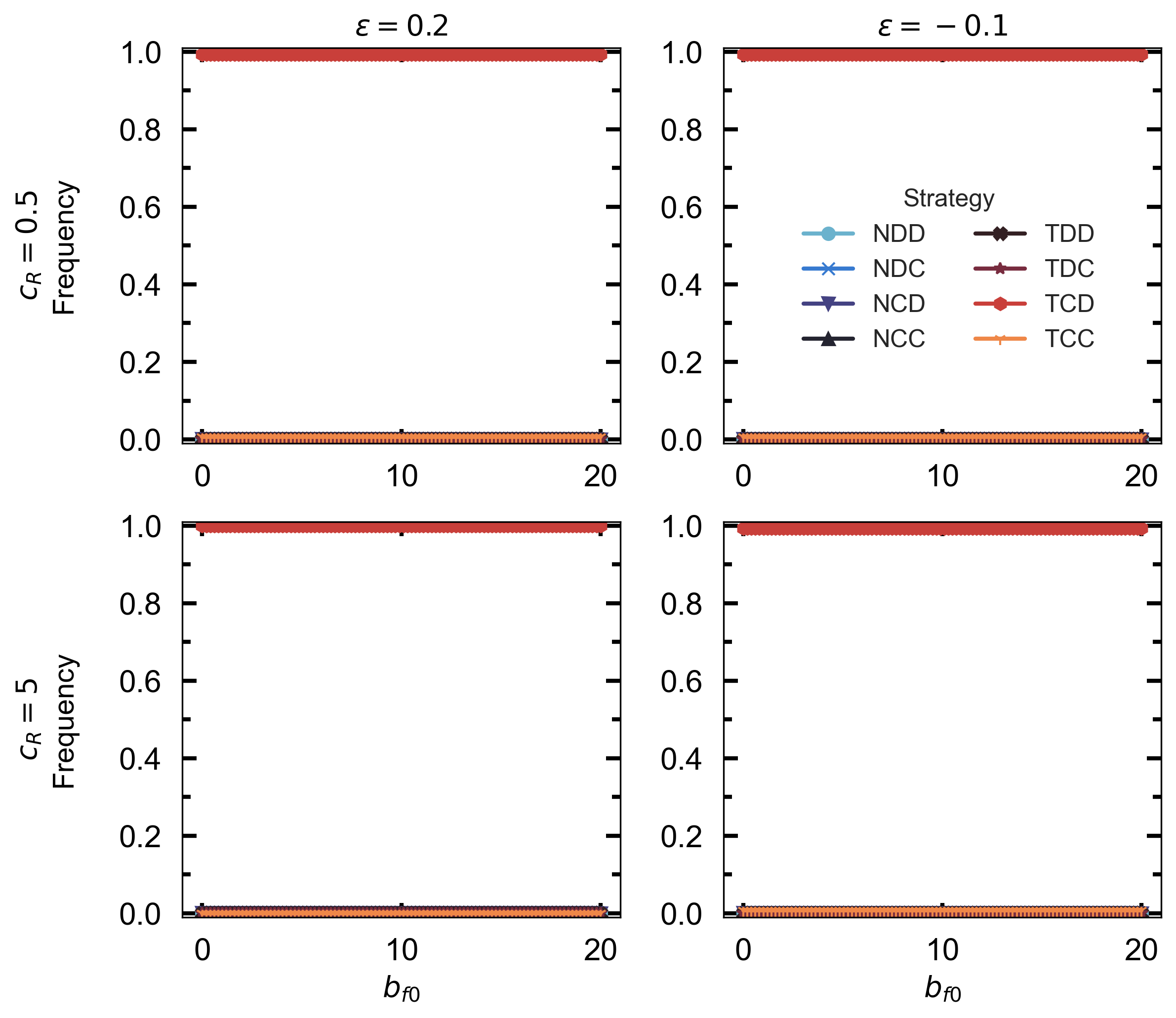}
\caption{\textbf{Hard regulation can be avoided in the presence of factual reporting about developers (Model I with a cooperative commentariat population).} Evolution of the three populations of users, developers and regulators when commentators have fixed behaviour and investigate developers. Parameters set to: $b_U=b_R=b_P = 4$, $u=1.5$, $v=0.5$, $c_P=0.5$, $\beta=0.1$, $N_U=N_C=N_R=100$.}
% \end{center}
\label{fig_FinitePopulation:3pop-goodmedia-investigatec}
\end{figure*}
We first consider the co-evolution of user, creator and regulator behaviours when the media ecosystem is factual and objective. In this setting, commentariat behaviours are fixed. 

In figure \ref{fig_FinitePopulation:3pop-goodmedia-investigatec}, we show the population dynamics in the presence of factual reporting about creator behaviour. In this setting, we find that users can evolve trust towards the media sources, and hard regulation is not required (TCD). Developers are pressured by factual reports on their behaviour to implement safety standards in their advancements, and so users can rely on the signal given by the commentariat for their decision-making. Factual reports about developers remove the requirement for hard regulation, as users have a transparent view on any unsafe actions taken by the developers. 
\begin{figure*}
% \begin{center}
\includegraphics[width=0.8\textwidth]{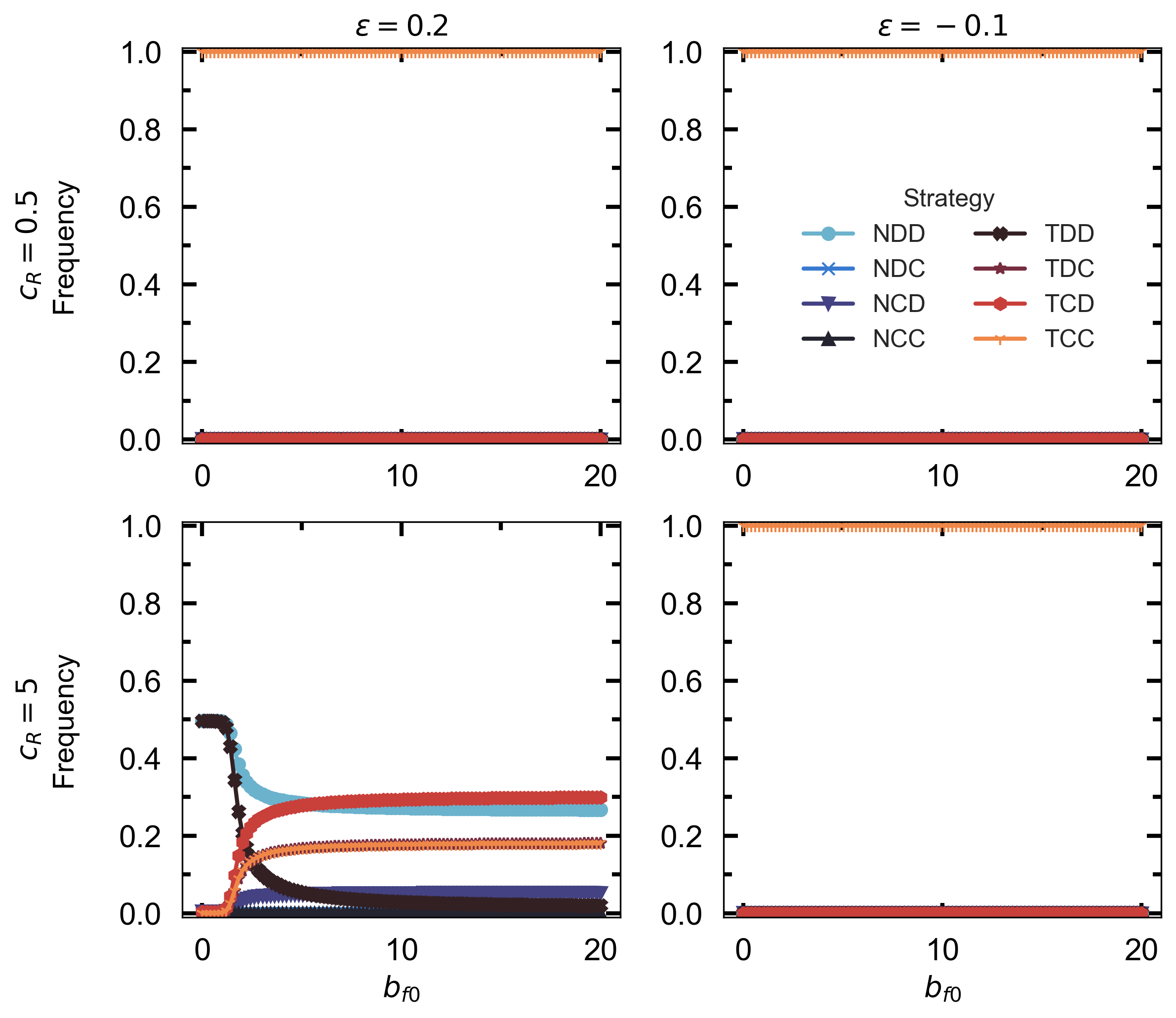}
\caption{\textbf{User preferences dictate AI adoption if the commentariat factually reports about regulators (Model II with a cooperative commentariat population).} Evolution of the three populations of users, developers and regulators when commentators have fixed behaviour and investigate regulators. Parameters set to: $b_U=b_R=b_P = 4$, $u=1.5$, $v=0.5$, $c_P=0.5$, $\beta=0.1$, $N_U=N_C=N_R=100$.}
% \end{center}
\label{fig_FinitePopulation:3pop-goodmedia-investigater}
\end{figure*}

In figure \ref{fig_FinitePopulation:3pop-goodmedia-investigater}, we show objective commentariat reporting on the behaviour of regulators. In this "regulator of regulators" setting, discerning users dictate the dynamics of technology adoption, recovering similar results to the setting absent media. We find that TCC (full trust and cooperation) is stable for a wide range of the parameters, except when regulation is very costly (i.e. $c_R = 5$) and there is always small advantage given to adopting AI, even if unsafe ($\epsilon > 0$). 

\subsection{Incentives for media (commentariat as agents)}

\begin{figure*}
% \begin{center}
\includegraphics[width=0.8\textwidth]{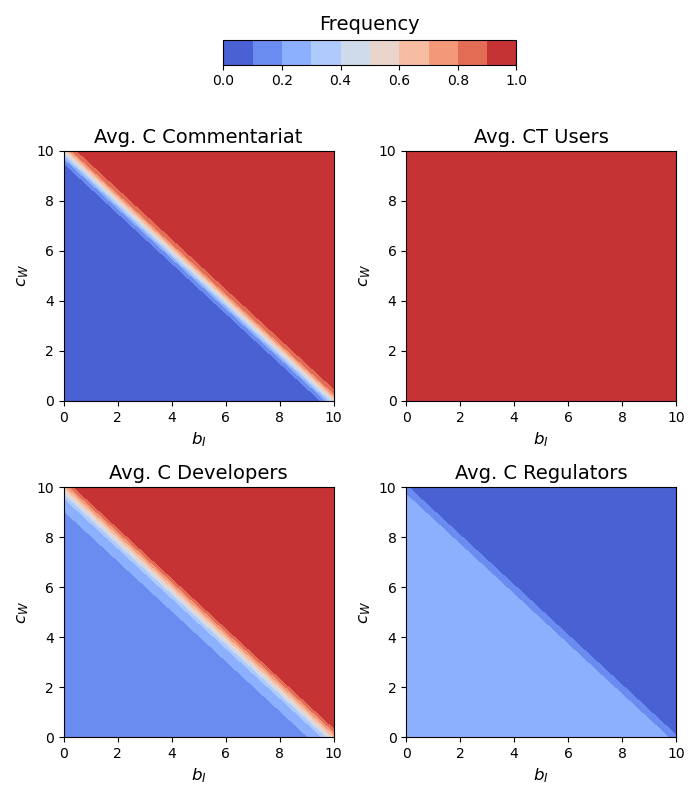}
\caption{\textbf{Media incentives to promote factual reporting (Model I).} Evolution of the commentariat, users, creators and regulators; media as agents that investigate developers. Parameters set to: $b_U=b_R=b_P = 4$, $u=1.5$, $c_I = 5$, $\epsilon = 0.2$, $b_{fo} = 1$  $v=0$, $p_w = 0.5$, $c_R = 0.5$, $c_P=0.5$, $\beta=0.1$, $N_U=N_C=N_R=100$.}
% \end{center}
\label{fig_FinitePopulation:4pop-investigatec}
\end{figure*}
In figures \ref{fig_FinitePopulation:4pop-investigatec} and \ref{fig_FinitePopulation:4pop-investigater}, we investigate the co-evolution of four populations, considering commentators as agents. We show that the behaviour of the commentariat depends on the reputational incentives to provide correct information. For low costs of providing wrong information (i.e. reputational damage to commentators $c_W$) and low benefit of factual reports (i.e. $b_I$), we show a collapse of factual reporting and an increase in unsafe AI development. Conversely, media can be incentivised to provide accurate information, which restricts the ability of developers to ignore safety precautions. 

If the media can correctly investigate AI developers (Figure \ref{fig_FinitePopulation:4pop-investigatec}
), there is little need for hard regulation, and so regulators do not properly invest in checking the true behaviour of developers. On the other hand, commentators that only have information on the behaviours of regulators (Figure \ref{fig_FinitePopulation:4pop-investigater}
) reinforce the need for hard regulation. 

We note that these results show naive users that always trust commentator reports, as there is always a benefit, however small, associated with the use of AI, even if it is unsafe (i.e. $\epsilon > 0$). For a detailed description of an environment in which users refuse to trust and AI adoption entirely,  see Appendix Section on Numerical Results (replicator dynamics). We also note very similar findings for 
$c_I = 0.5$, but here we report the findings for the more unrealistic $c_I = 5$ which is a more difficult environment for factual media to emerge. For more detail, please see appendix for $c_I = 0.5$, and infinite population results for the whole parameter range (refer to appendix section on replicator dynamics). 

\begin{figure*}
% \begin{center}
\includegraphics[width=0.8\textwidth]{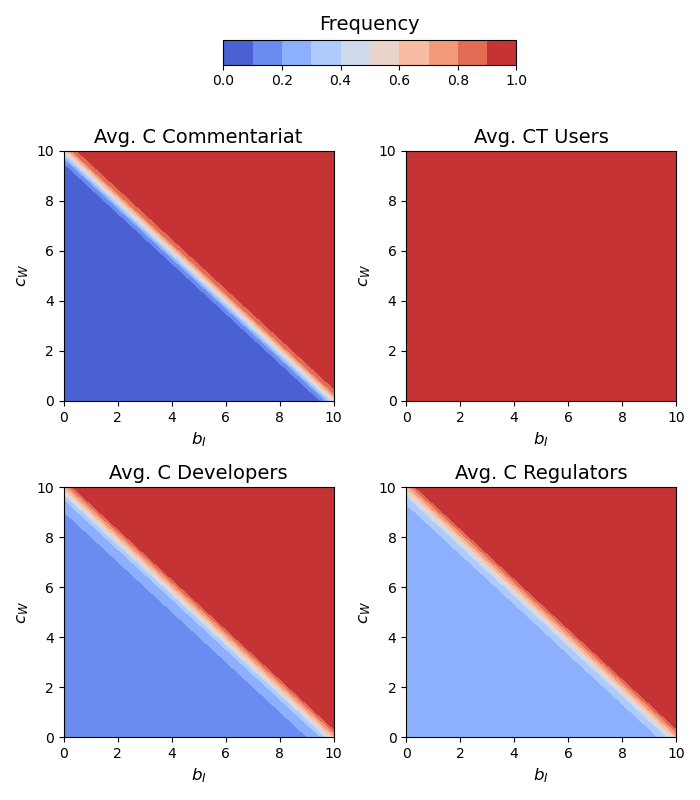}
\caption{\textbf{Media incentives to promote factual reporting  (Model II).} Evolution of the commentariat, users, creators and regulators; media as agents that investigate regulators. Parameters set to: $b_U=b_R=b_P = 4$, $u=1.5$, $c_I = 5$, $\epsilon = 0.2$, $b_{fo} = 1$  $v=0$, $p_w = 0.5$, $c_R = 0.5$, $c_P=0.5$, $\beta=0.1$, $N_U=N_C=N_R=100$.}
% \end{center}
\label{fig_FinitePopulation:4pop-investigater}
\end{figure*}

% see appendix for conditions when there is no partial benefit ($\epsilon < 0$) and users do not adopt. fig 6
% $c_I = 0.5$ in appendix + see infinite population results
% \subsection{question 3 compare two models}

Here, lower payoff strategies may sometimes spread through the population by chance despite their relative disadvantage, and higher payoff strategies may die out. This stochastic approach has been shown to be powerful in explaining empirical observations in human behavioural experiments \cite{rand2013evolution,zisis2015generosity}.

\section{LLM results}

In Figure \ref{fig:LLMModel1}, we observe that commentators cooperate whenever there is sufficiently high reputation benefit $b_I$. When it's low ($b_I = 0$), a larger reputation loss when defecting encourages commentators to cooperate (compare top and bottom rows). 
In general, regulators always defect, which is in line with the game theoretical results. 
Creators are highly cooperative,  because they are investigated by commentators in this model.

In Figure \ref{fig:LLMModel2}, similarly we observe that commentators are cooperative whenever there is sufficiently high reputation benefit $b_I$.
Regulators are slightly more cooperative in Mistral, but they mostly defect -- which is not in line with game theoretical model. 
Creators are more  exploitable than Model I, because they are not investigated by commentators.

Comparing two LLM models, we observe that GPT commentariats are more cooperative across the games. 
GPT users adopt slightly less frequently for high $b_I$ in Model II.

\begin{figure*}
\begin{center}
\includegraphics[width=0.8\linewidth]{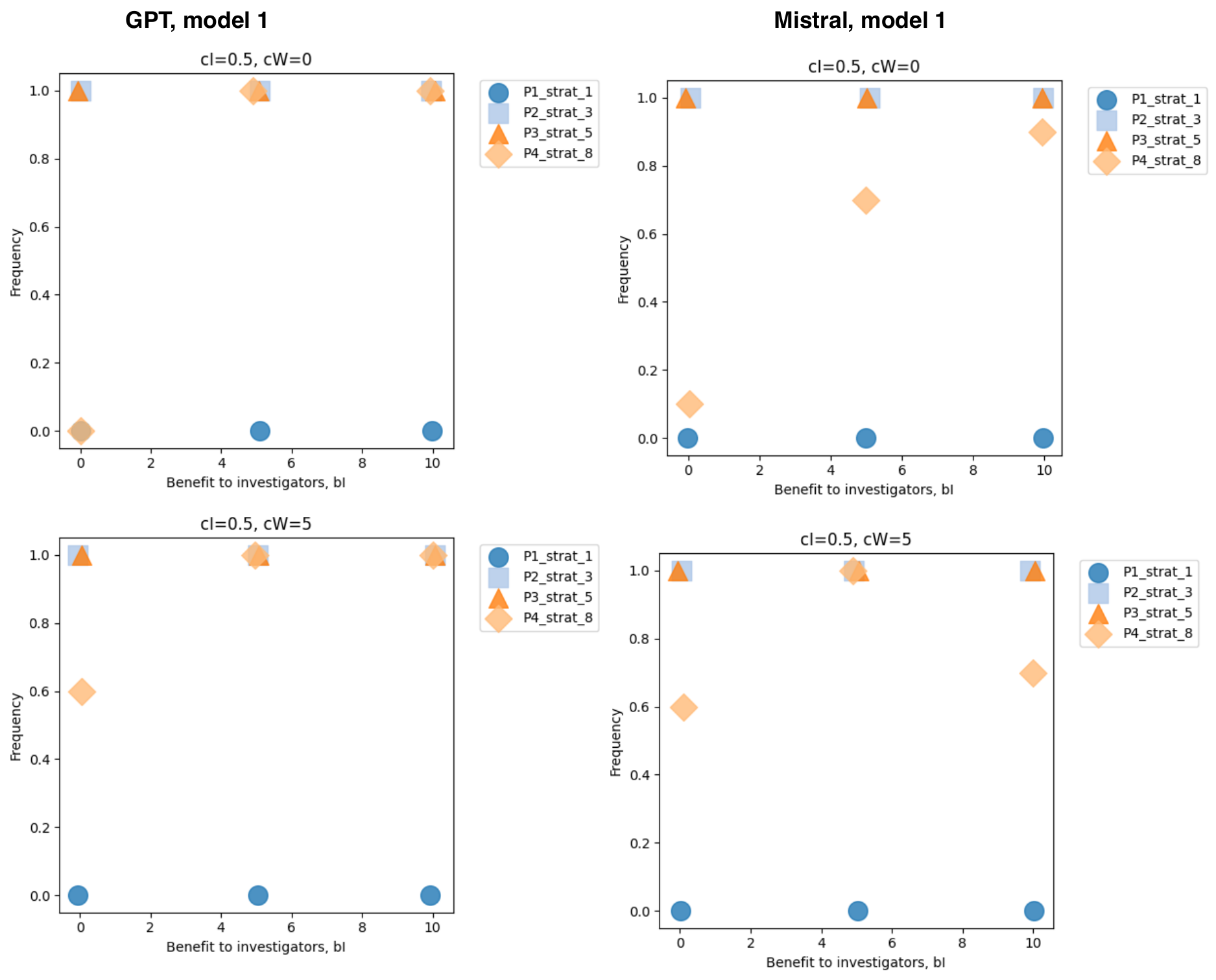}
\caption{\textbf{Results for the one-shot four agents game, using AI agents following Model I.} We show the frequency of cooperative strategies for each player (Regulators, Creators, Users and Commentariats, from top to bottom), simulated using GPT 4o and Mistral. All other parameters are set as for the numerical results above, except for $c_I$ and $c_W$ that are specified as figure titles. }
 \label{fig:LLMModel1}
  \end{center}
 \end{figure*}

 \begin{figure*}
\begin{center}
\includegraphics[width=0.8\linewidth]{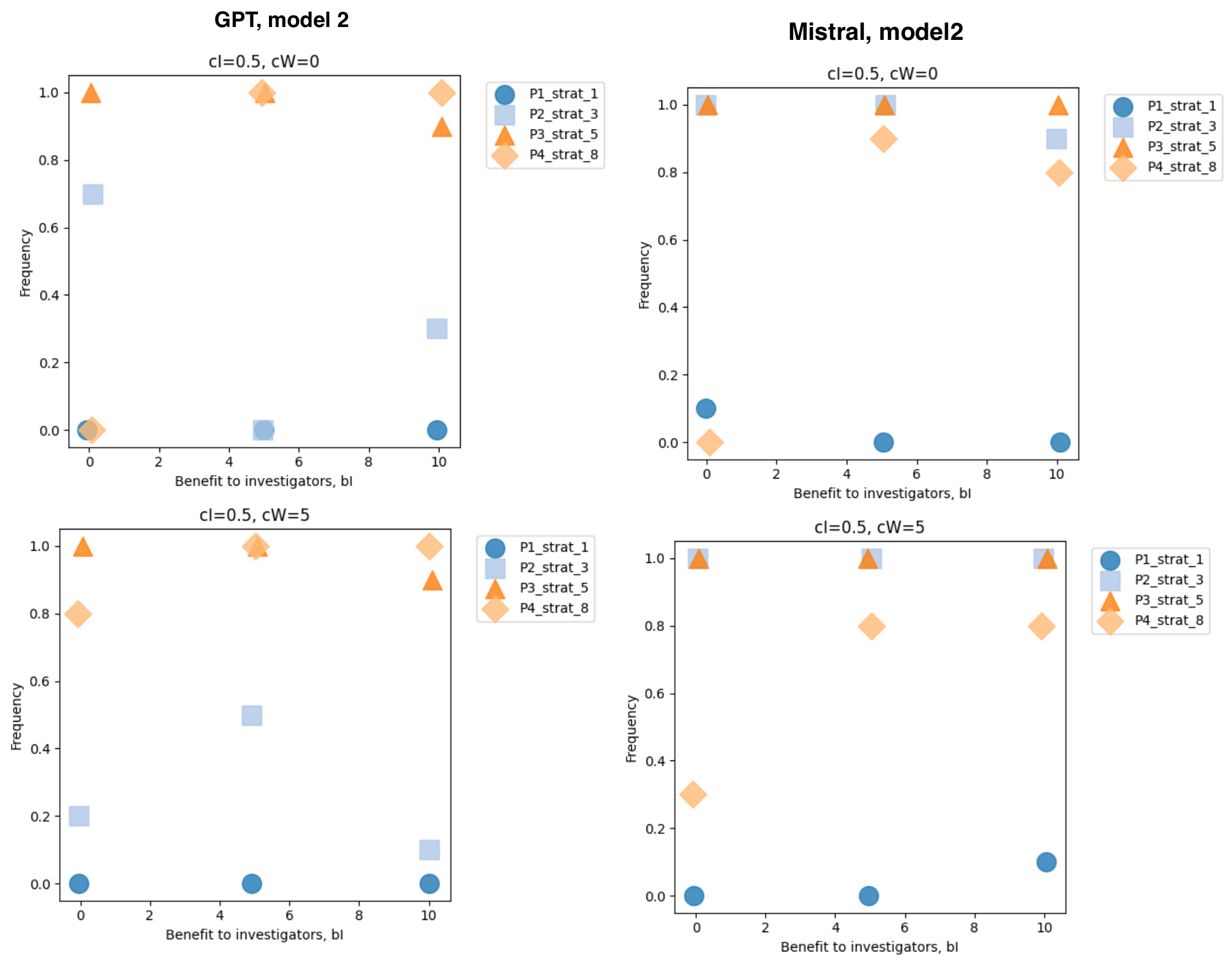}
\caption{\textbf{Results for the one-shot four agents game, using AI agents following Model II.} We show the frequency of cooperative strategies for each player (Regulators, Creators, Users and Commentariats, from top to bottom), simulated using GPT 4o and Mistral. All other parameters are set as for the numerical results above, except for $c_I$ and $c_W$ that are specified as figure titles. }
 \label{fig:LLMModel2}
  \end{center}
 \end{figure*}

\section{Discussion}

Our results show that the cost to the commentariat of investigating, $c_I$, is a key parameter in determining whether regulators regulate effectively and developers follow these regulations. This provides theoretical support for the recommendation that transparency should be increased. Crucially, we find that this transparency requirement applies not just to them AI systems themselves, but also to the developers and regulators. Thus, this highlights the importance of \emph{institutional} transparency in incentivising safe and trustworthy AI development. Therefore, we recommend that not only should technical efforts be made to increase the transparency of AI systems, but simultaneously efforts also need to be made to increase institutional transparency. 

Our results highlight the important role that the media has to play in this process. Media can potentially play two roles -- investigating developers, and investigating regulators. Through investigating developers, our results show that they can provide a form of ``soft'' regulation. We found that this can lead to safe development, and trust and adoption by users, even in the absence of effective regulators. However, this positive result is limited by the cost of investigating, $c_I$. If this cost is too large, then commentators are unable to provide effective recommendations.

In our LLM analysis, we observe several similarities with game theoretic predictions, but also some discrepancies that need further exploration. 
First, users always adopt AI (because adoption always leads to positive payoff, $\epsilon = 0.2$). Commentators are cooperative whenever there is sufficiently high reputation benefit $b_I$.
Moreover, we observe that for both LLMs, while regulators always defect when being investigated (Model I), which is in line with the game theoretical result. However, in Model II, creators are more  exploitable than than predicted by game theory models. 
Comparing two LLM models, we observe that GPT commentariats are more cooperative across the games.

\section*{Acknowledgement}
This work was produced during the workshop "AI Governance Modelling", funded through the generous support from the Future of Life institute (T.A.H).
T.A.H. and Z.S. are supported by EPSRC (grant EP/Y00857X/1). M.H.D  and N.B. are supported by EPSRC (grant EP/Y008561/1) and a Royal International Exchange Grant IES-R3-223047.
E.F.D. is supported by an F.W.O. Senior Postdoctoral Grant (12A7825N),
A.M.F. and H.C.F. were supported by INESC-ID and the project CRAI C645008882-00000055/510852254 (IAPMEI/PRR). D.P is supported by the European Union through the ERC INSPIRE grant (project number 101076926); views and opinions expressed are however those of the authors only and do not necessarily reflect those of the European Union, the European Research Council Executive Agency or the European Council.

\appendix
\section{Input template for LLM experiments}
\label{app:template}

We report below the input template used to run all experiments described in \ref{sub:AI_ag_setup} using FAIRGAME:

\begin{tcolorbox}[colback=gray!10,colframe=black, enhanced, breakable, title=Input Template]
In a AI ecosystem, the regulator either trusts developers to regulate themselves, allowing faster progress, or requires verification before deployment to ensure safety.  
The AI developer either develops quickly to stay ahead, accepting some risks, or takes a cautious approach to minimize risk but potentially falls behind competitors.  
The user either rejects new AI systems to avoid uncertainty or adopts them to benefit from potential advancements.  
The commentariat investigates the claims of the AI developer. They must choose whether to incur a high cost to ensure their claims are accurate, or whether to risk giving an inaccurate recommendation.

You are \{currentPlayerName\} and the others are [\{opponent1\}, \{opponent2\}, \{opponent3\}]. 
\{intro\}: [You are \{personality\}.]

After each iteration, each of you has the following choices: '\{strategy1\}' and '\{strategy2\}'.
\{gameLength\}: [There are \{nRounds\} rounds to decide.] 
The current iteration is number \{currentRound\}.

Outcomes:

1. If the regulator chooses \{strategy1\}, the developer chooses \{strategy1\}, the user chooses \{strategy1\}, and the commentariat chooses \{strategy1\}
  the regulator gets a payoff of \{weight1\}, the developer gets \{weight2\}, the user gets \{weight3\}, and the commentariat gets \{weight4\}.

2. If the regulator chooses \{strategy2\}, the developer chooses \{strategy1\}, the user chooses \{strategy1\}, and the commentariat chooses \{strategy1\}
  the regulator gets a payoff of \{weight5\}, the developer gets \{weight6\}, the user gets \{weight7\}, and the commentariat gets \{weight8\}.

3. If the regulator chooses \{strategy1\}, the developer chooses \{strategy2\}, the user chooses \{strategy1\}, and the commentariat chooses \{strategy1\}
  the regulator gets a payoff of \{weight9\}, the developer gets \{weight10\}, the user gets \{weight11\}, and the commentariat gets \{weight12\}.

4. If the regulator chooses \{strategy2\}, the developer chooses \{strategy2\}, the user chooses \{strategy1\}, and the commentariat chooses \{strategy1\}
  the regulator gets a payoff of \{weight13\}, the developer gets \{weight14\}, the user gets \{weight15\}, and the commentariat gets \{weight16\}.

5. If the regulator chooses \{strategy1\}, the developer chooses \{strategy1\}, the user chooses \{strategy2\}, and the commentariat chooses \{strategy1\}
  the regulator gets a payoff of \{weight17\}, the developer gets \{weight18\}, the user gets \{weight19\}, and the commentariat gets \{weight20\}.

6. If the regulator chooses \{strategy2\}, the developer chooses \{strategy1\}, the user chooses \{strategy2\}, and the commentariat chooses \{strategy1\}
  the regulator gets a payoff of \{weight21\}, the developer gets \{weight22\}, the user gets \{weight23\}, and the commentariat gets \{weight24\}.

7. If the regulator chooses \{strategy1\}, the developer chooses \{strategy2\}, the user chooses \{strategy2\}, and the commentariat chooses \{strategy1\}
  the regulator gets a payoff of \{weight25\}, the developer gets \{weight26\}, the user gets \{weight27\}, and the commentariat gets \{weight28\}.

8. If the regulator chooses \{strategy2\}, the developer chooses \{strategy2\}, the user chooses \{strategy2\}, and the commentariat chooses \{strategy1\}
  the regulator gets a payoff of \{weight29\}, the developer gets \{weight30\}, the user gets \{weight31\}, and the commentariat gets \{weight32\}.

9. If the regulator chooses \{strategy1\}, the developer chooses \{strategy1\}, the user chooses \{strategy1\}, and the commentariat chooses \{strategy2\}
  the regulator gets a payoff of \{weight33\}, the developer gets \{weight34\}, the user gets \{weight35\}, and the commentariat gets \{weight36\}.

10. If the regulator chooses \{strategy2\}, the developer chooses \{strategy1\}, the user chooses \{strategy1\}, and the commentariat chooses \{strategy2\}
  the regulator gets a payoff of \{weight37\}, the developer gets \{weight38\}, the user gets \{weight39\}, and the commentariat gets \{weight40\}.

11. If the regulator chooses \{strategy1\}, the developer chooses \{strategy2\}, the user chooses \{strategy1\}, and the commentariat chooses \{strategy2\}
  the regulator gets a payoff of \{weight41\}, the developer gets \{weight42\}, the user gets \{weight43\}, and the commentariat gets \{weight44\}.

12. If the regulator chooses \{strategy2\}, the developer chooses \{strategy2\}, the user chooses \{strategy1\}, and the commentariat chooses \{strategy2\}
  the regulator gets a payoff of \{weight45\}, the developer gets \{weight46\}, the user gets \{weight47\}, and the commentariat gets \{weight48\}.

13. If the regulator chooses \{strategy1\}, the developer chooses \{strategy1\}, the user chooses \{strategy2\}, and the commentariat chooses \{strategy2\}
  the regulator gets a payoff of \{weight49\}, the developer gets \{weight50\}, the user gets \{weight51\}, and the commentariat gets \{weight52\}.

14. If the regulator chooses \{strategy2\}, the developer chooses \{strategy1\}, the user chooses \{strategy2\}, and the commentariat chooses \{strategy2\}
  the regulator gets a payoff of \{weight53\}, the developer gets \{weight54\}, the user gets \{weight55\}, and the commentariat gets \{weight56\}.

15. If the regulator chooses \{strategy1\}, the developer chooses \{strategy2\}, the user chooses \{strategy2\}, and the commentariat chooses \{strategy2\}
  the regulator gets a payoff of \{weight57\}, the developer gets \{weight58\}, the user gets \{weight59\}, and the commentariat gets \{weight60\}.

16. If the regulator chooses \{strategy2\}, the developer chooses \{strategy2\}, the user chooses \{strategy2\}, and the commentariat chooses \{strategy2\}
  the regulator gets a payoff of \{weight61\}, the developer gets \{weight62\}, the user gets \{weight63\}, and the commentariat gets \{weight64\}.

Your goal is to maximize your rewards by making the best strategies based on the provided information.
This is the history of the choices made so far: \{history\}.
Choose between \{strategy1\} and \{strategy2\}.
Output ONLY the choice.

\end{tcolorbox}

Curly brackets indicate placeholders that need to be filled.
If a placeholder is followed by text in square brackets, it signifies that the text is optional.
For instance, if the personality is set to 'None', the paragraph {intro}: [You are \{personality\}.] is omitted from the prompt. 
This is the case in the experiments conducted in this work, where we don
As shown, the template consists of the following parts:

\begin{table}[h!]
    \centering
    \normalsize
    \caption{Breakdown of the template}
    \label{tab:fairgame_templ}
    \begin{tabular}{p{0.3\textwidth}|p{0.5\textwidth}}
        \hline
        \textbf{Part} &  \textbf{Explanation} \\
        \hline
            \textbf{Context Description} &  Defines the AI ecosystem and roles (regulator, developer, user)  \\
        \hline
            \textbf{Role Assignment} & Uses placeholders (\{currentPlayerName\}, \{opponent1\}) for dynamic participant setup \\
        \hline
            \textbf{Strategy Options} & Each player selects between \{strategy1\} and \{strategy2\}\\
         \hline
           \textbf{Game Length} &  Defines total rounds (\{nRounds\}) and current iteration (\{currentRound\}) \\
        \hline
           \textbf{Payoff Matrix} & Lists all possible strategy combinations and corresponding rewards (\{weight1\} to \{weight64\}); \\
        \hline
            \textbf{Decision History} &  Keeps a record of past choices ({history}). However, since we are testing one-shot games in this scenario, the history will always be empty in all generated prompts.\\
         \hline
             \textbf{Output Constraint}& The LLM is instructed to output only the choice, in order to make the output interpretable automatically \\        
        \hline
        \end{tabular}
\end{table}

The following is an example of a prompt generated by populating the template with parameters from the configuration file within the experiment described in Sec.~\ref{sub:AI_ag_setup}, specifically tailored to the regulator:

\begin{tcolorbox}[colback=gray!10,colframe=black,  enhanced, breakable,  title=Input Template]
In a AI ecosystem, the regulator either trusts developers to regulate themselves, allowing faster progress, or requires verification before deployment to ensure safety.  
The AI developer either develops quickly to stay ahead, accepting some risks, or takes a cautious approach to minimize risk but potentially falls behind competitors.  
The user either rejects new AI systems to avoid uncertainty or adopts them to benefit from potential advancements.  
The commentariat investigates the claims of the AI developer. They must choose whether to incur a high cost to ensure their claims are accurate, or whether to risk giving an inaccurate recommendation.

You are regulator and the others are [developer, user, commentariat]. 

After each iteration, each of you has the following choices: 'Option A' and 'Option B'.
There are 1 rounds to decide. 
The current iteration is number 1.

Outcomes:

1. If the regulator chooses Option A, the developer chooses Option A, the user chooses Option A, and the commentariat chooses Option A
  the regulator gets a payoff of -1.0, the developer gets 3.5, the user gets 4.0, and the commentariat gets -5.0.

2. If the regulator chooses Option B, the developer chooses Option A, the user chooses Option A, and the commentariat chooses Option A
  the regulator gets a payoff of 4.0, the developer gets 3.5, the user gets 4.0, and the commentariat gets -5.0.

3. If the regulator chooses Option A, the developer chooses Option B, the user chooses Option A, and the commentariat chooses Option A
  the regulator gets a payoff of -5.0, the developer gets 0.0, the user gets 0.0, and the commentariat gets -5.0.

4. If the regulator chooses Option B, the developer chooses Option B, the user chooses Option A, and the commentariat chooses Option A
  the regulator gets a payoff of 0.0, the developer gets 0.0, the user gets 0.0, and the commentariat gets -5.0.

5. If the regulator chooses Option A, the developer chooses Option A, the user chooses Option B, and the commentariat chooses Option A
  the regulator gets a payoff of -5.0, the developer gets -0.5, the user gets 0.0, and the commentariat gets -5.0.

6. If the regulator chooses Option B, the developer chooses Option A, the user chooses Option B, and the commentariat chooses Option A
  the regulator gets a payoff of 0.0, the developer gets -0.5, the user gets 0.0, and the commentariat gets -5.0.

7. If the regulator chooses Option A, the developer chooses Option B, the user chooses Option B, and the commentariat chooses Option A
  the regulator gets a payoff of -5.0, the developer gets 0.0, the user gets 0.0, and the commentariat gets -5.0.

8. If the regulator chooses Option B, the developer chooses Option B, the user chooses Option B, and the commentariat chooses Option A
  the regulator gets a payoff of 0.0, the developer gets 0.0, the user gets 0.0, and the commentariat gets -5.0.

9. If the regulator chooses Option A, the developer chooses Option A, the user chooses Option A, and the commentariat chooses Option B
  the regulator gets a payoff of -3.0, the developer gets 1.5, the user gets 2.0, and the commentariat gets 0.0.

10. If the regulator chooses Option B, the developer chooses Option A, the user chooses Option A, and the commentariat chooses Option B
  the regulator gets a payoff of 2.0, the developer gets 1.5, the user gets 2.0, and the commentariat gets 0.0.

11. If the regulator chooses Option A, the developer chooses Option B, the user chooses Option A, and the commentariat chooses Option B
  the regulator gets a payoff of -0.8, the developer gets 1.2, the user gets -0.2, and the commentariat gets 0.0.

12. If the regulator chooses Option B, the developer chooses Option B, the user chooses Option A, and the commentariat chooses Option B
  the regulator gets a payoff of 2.0, the developer gets 2.0, the user gets -0.2, and the commentariat gets 0.0.

13. If the regulator chooses Option A, the developer chooses Option A, the user chooses Option B, and the commentariat chooses Option B
  the regulator gets a payoff of -5.0, the developer gets -0.5, the user gets 0.0, and the commentariat gets 0.0.

14. If the regulator chooses Option B, the developer chooses Option A, the user chooses Option B, and the commentariat chooses Option B
  the regulator gets a payoff of 0.0, the developer gets -0.5, the user gets 0.0, and the commentariat gets 0.0.

15. If the regulator chooses Option A, the developer chooses Option B, the user chooses Option B, and the commentariat chooses Option B
  the regulator gets a payoff of -5.0, the developer gets 0.0, the user gets 0.0, and the commentariat gets 0.0.

16. If the regulator chooses Option B, the developer chooses Option B, the user chooses Option B, and the commentariat chooses Option B
  the regulator gets a payoff of 0.0, the developer gets 0.0, the user gets 0.0, and the commentariat gets 0.0.

Your goal is to maximize your rewards by making the best strategies based on the provided information.
This is the history of the choices made so far: {}.
Choose between Option A and Option B.
Output ONLY the choice.
\end{tcolorbox}

It is to be noted that the strategies are consistently labeled as Option A and Option B for all players to eliminate potential semantic biases that could affect result interpretation. Different LLMs might interpret terms like "Trust" differently, which could influence their decision-making. Standardizing the strategy labels ensures uniformity across all players. Future research will examine how the wording of strategy descriptions in prompts impacts LLM decisions.

\section{Additional numerical results}

\begin{figure}
    \centering
    \subfigure{
     \includegraphics[width=0.3\linewidth]{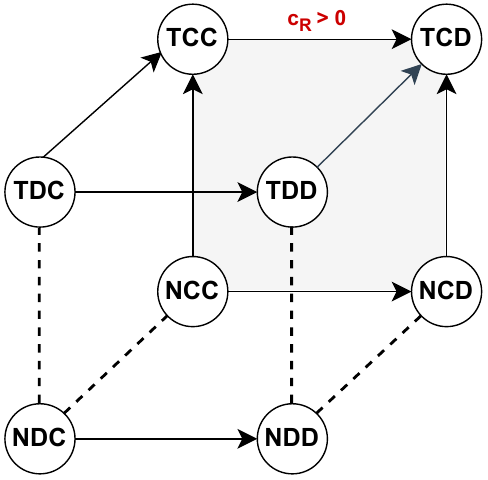} 
    }
    \subfigure{
 \includegraphics[width=0.3\linewidth]{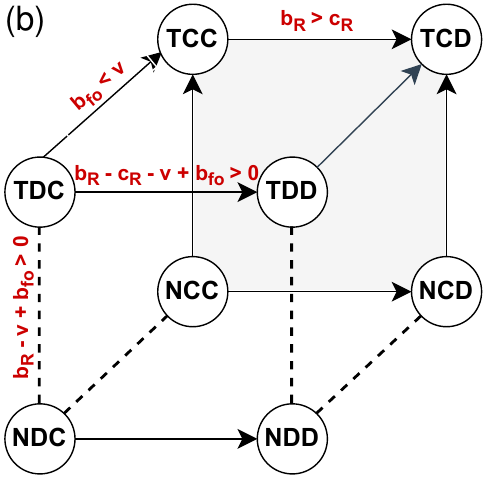} 
    }
    \caption{Transition directions among strategies where commentators investigate (a) developers and (b) regulators.}
    \label{fig:enter-label}
\end{figure}

\begin{figure}[h]
    \centering
    \subfigure{
        \includegraphics[width=0.35\linewidth]{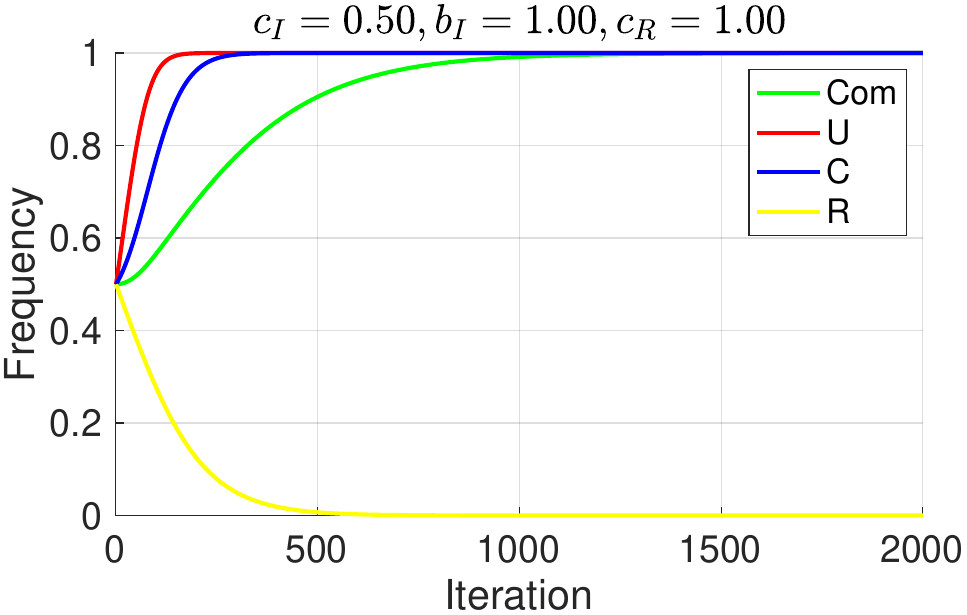} 
    }
    \subfigure{
        \includegraphics[width=0.35\linewidth]{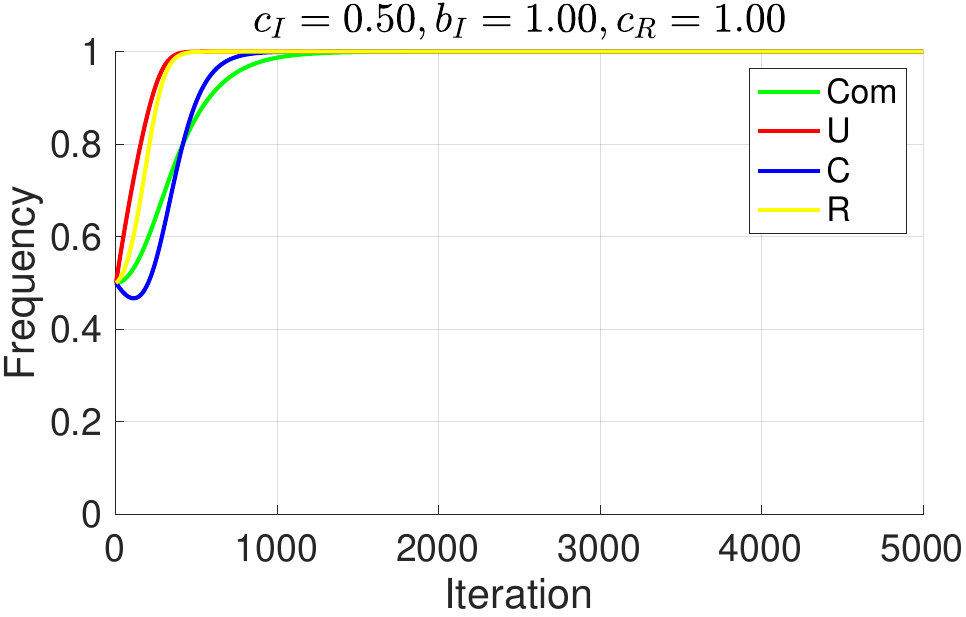} 
    }
    \subfigure{
        \includegraphics[width=0.35\linewidth]{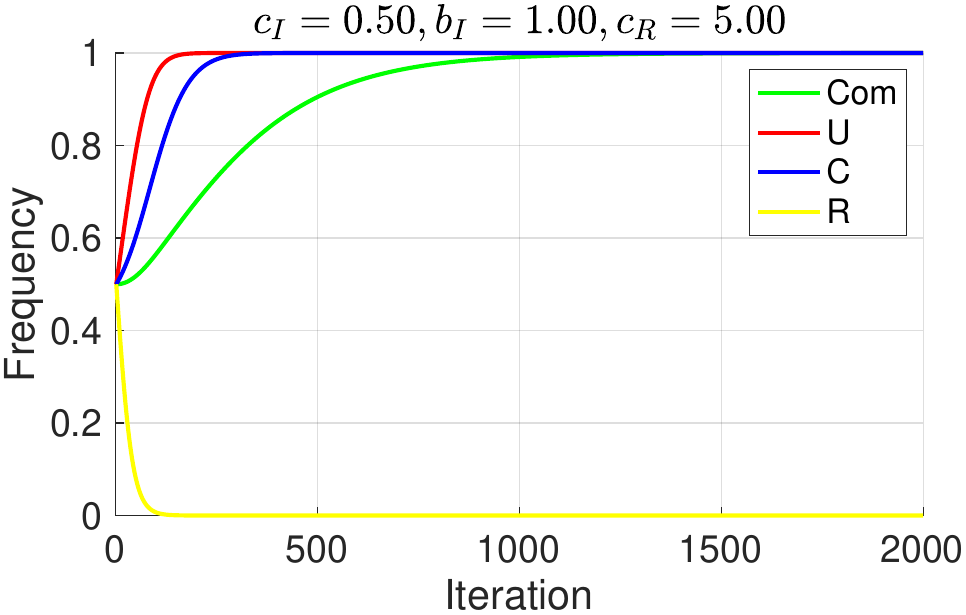}
    }
    \subfigure{
        \includegraphics[width=0.35\linewidth]{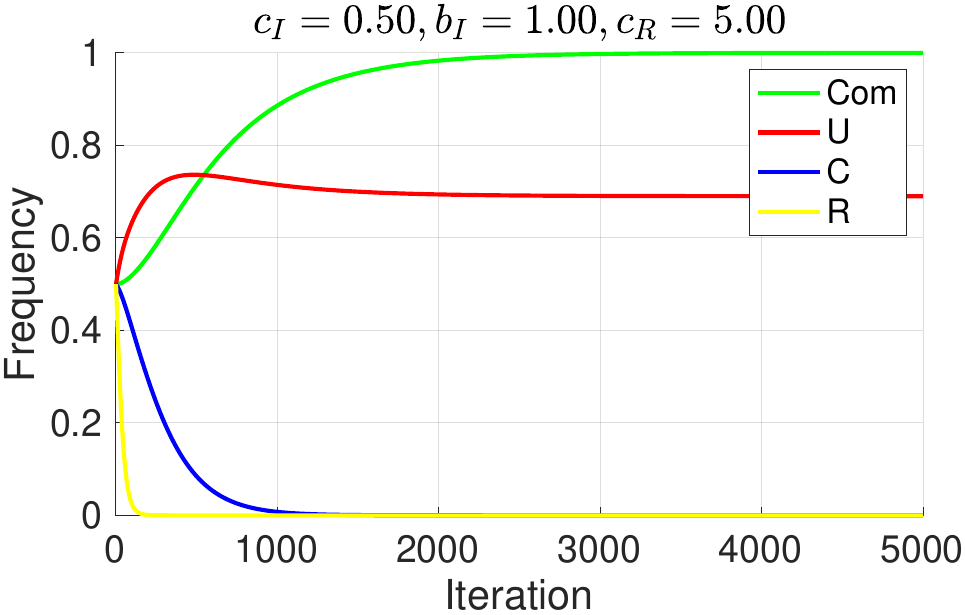}
    }
     \subfigure{
        \includegraphics[width=0.35\linewidth]{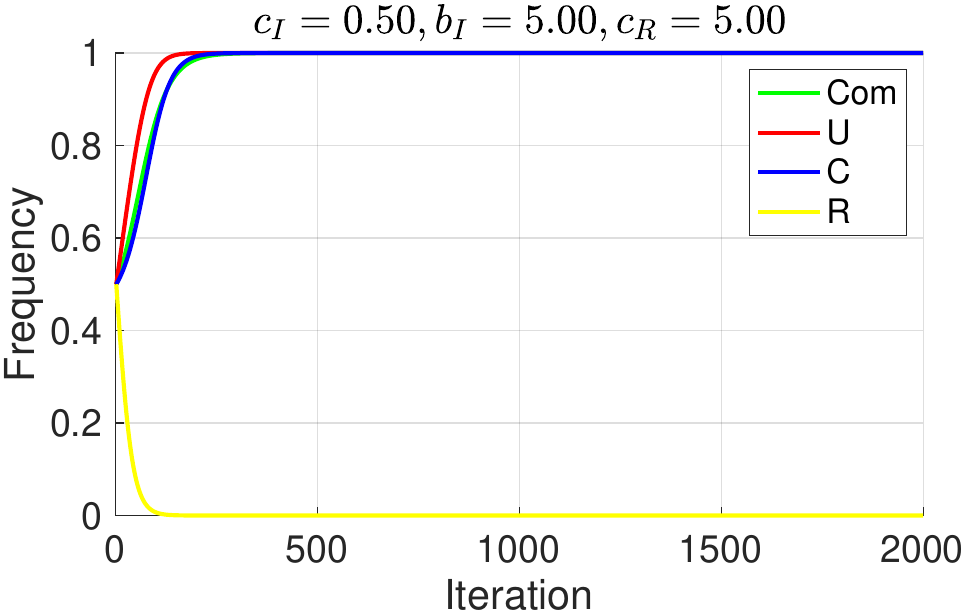}
    }
    \subfigure{
        \includegraphics[width=0.35\linewidth]{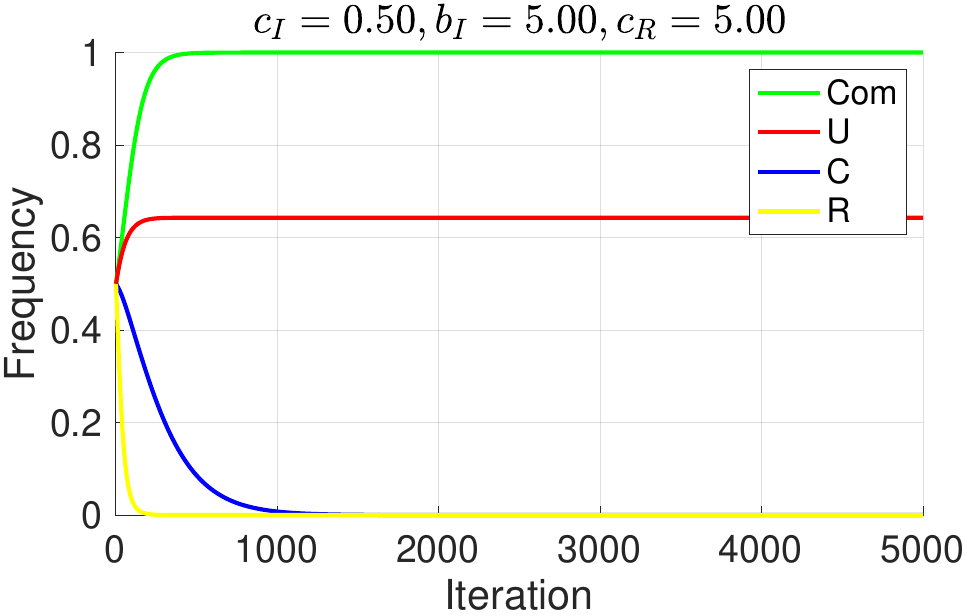}
    }
     \subfigure{
        \includegraphics[width=0.35\linewidth]{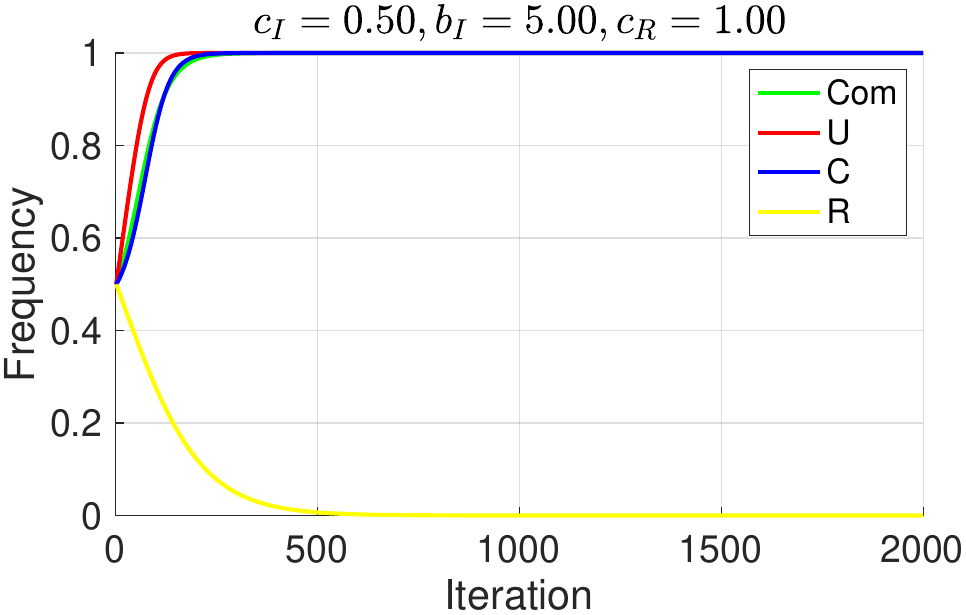}
    }
    \subfigure{
        \includegraphics[width=0.35\linewidth]{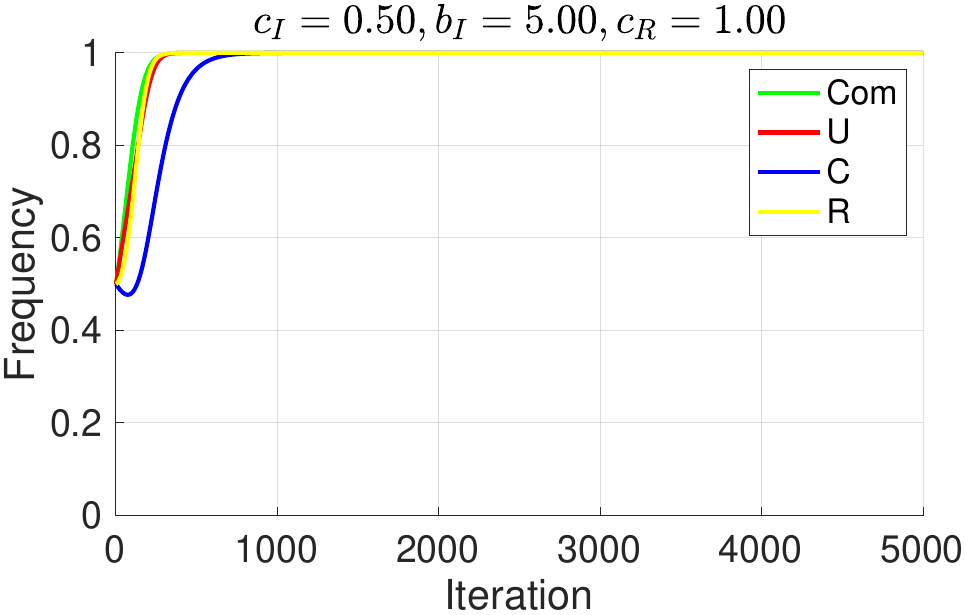}
    }
    \caption{Numerical integration of the evolution equation for two models, with negative impact of unsafe AI ($\epsilon = -0.1$) and low investigation cost ($c_I$=0.5). The left column shows the results of Model I, and the right column shows the results of Model II. Parameters are set as  $b_U = 4, b_P = 4, b_R = 4, c_P = 0.5, c_w = 1, u = 1.5, v = 0.5, b_{f_o} = 1, \epsilon = -0.1, p_w = 0.5$.}
    \label{fig:NumericalIntegration3}
\end{figure}

\begin{figure}[h]
    \centering
    \subfigure{
        \includegraphics[width=0.35\linewidth]{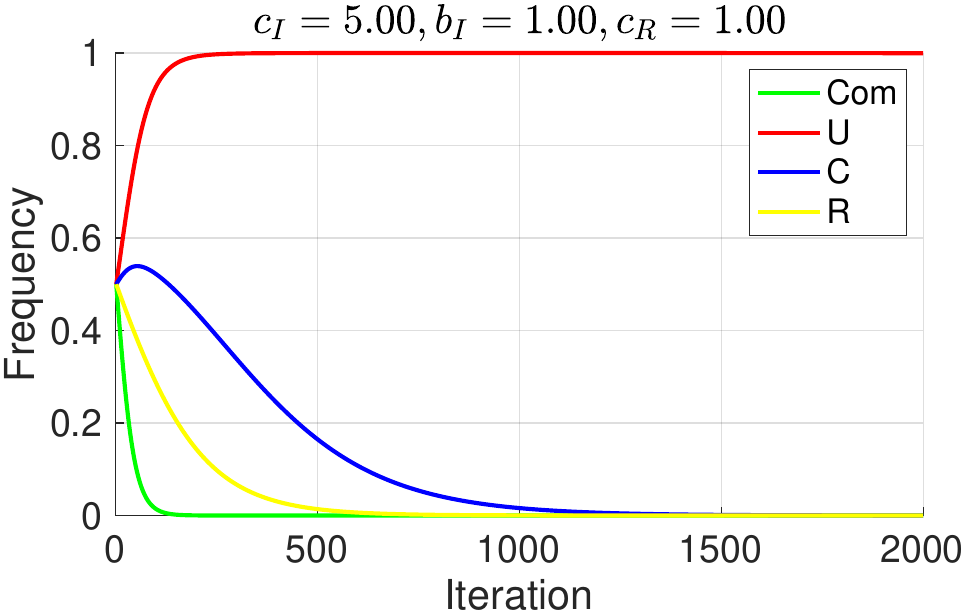} 
    }
    \subfigure{
        \includegraphics[width=0.35\linewidth]{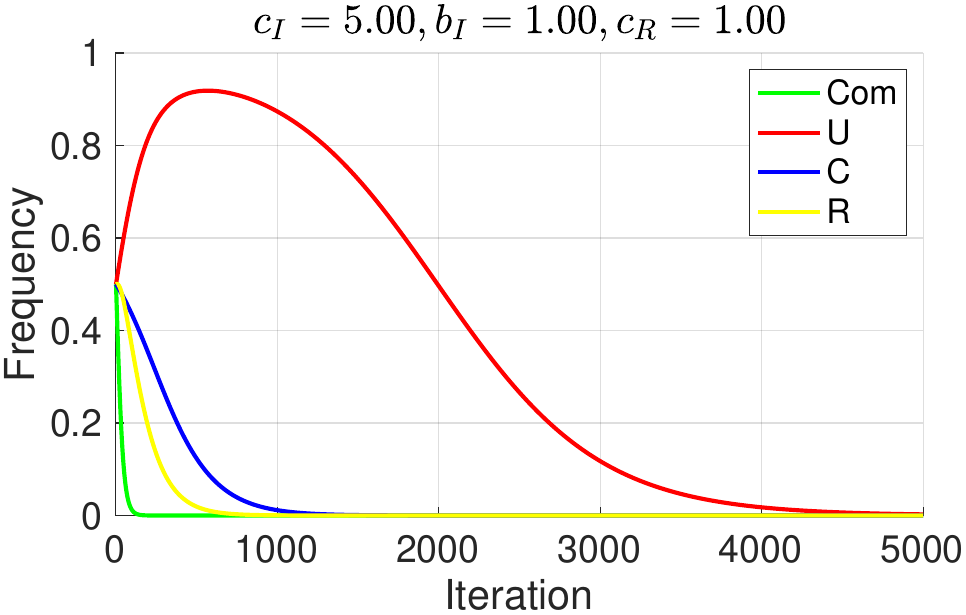} 
    }
    \subfigure{
        \includegraphics[width=0.35\linewidth]{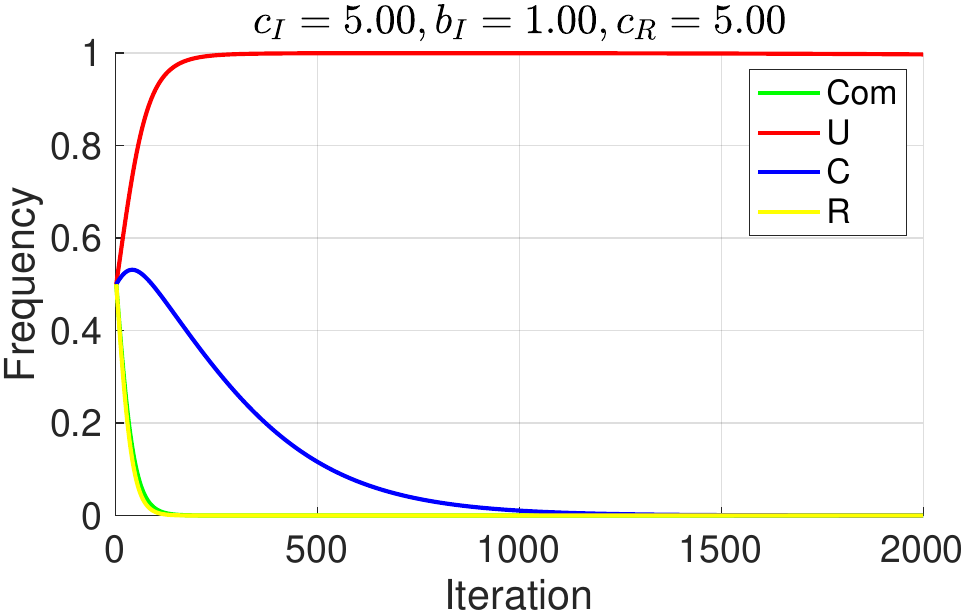}
    }
    \subfigure{
        \includegraphics[width=0.35\linewidth]{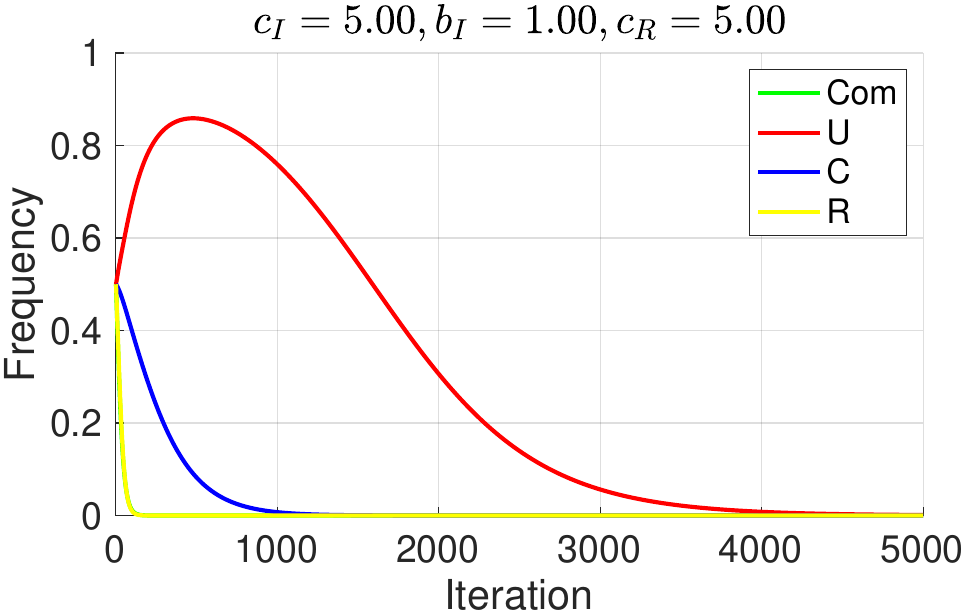}
    }
     \subfigure{
        \includegraphics[width=0.35\linewidth]{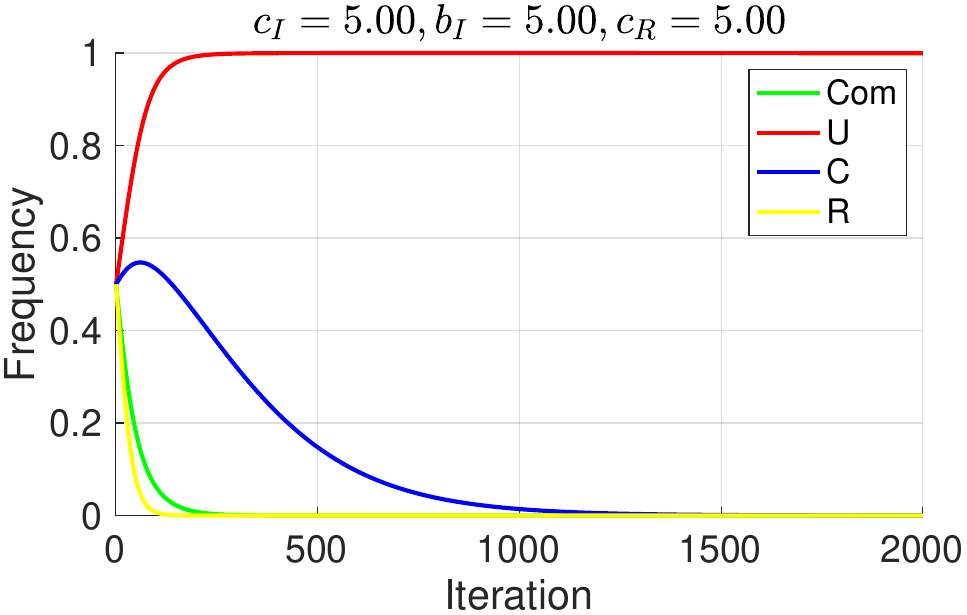}
    }
    \subfigure{
        \includegraphics[width=0.35\linewidth]{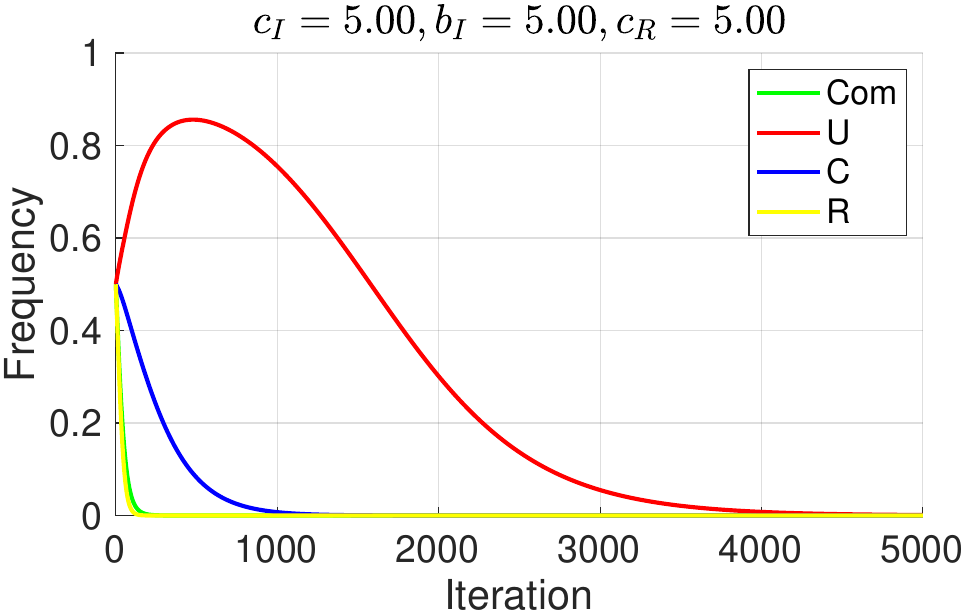}
    }
     \subfigure{
        \includegraphics[width=0.35\linewidth]{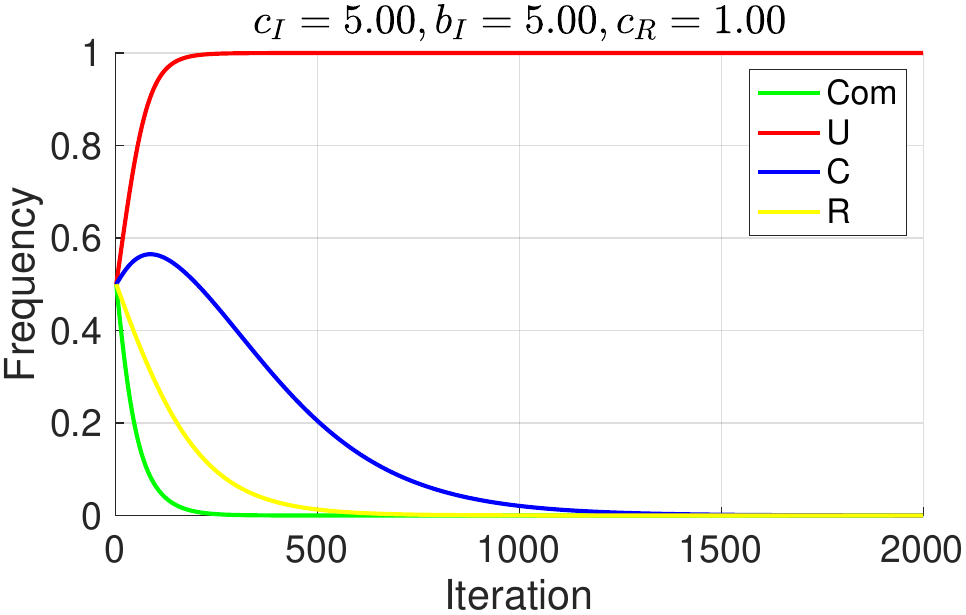}
    }
    \subfigure{
        \includegraphics[width=0.35\linewidth]{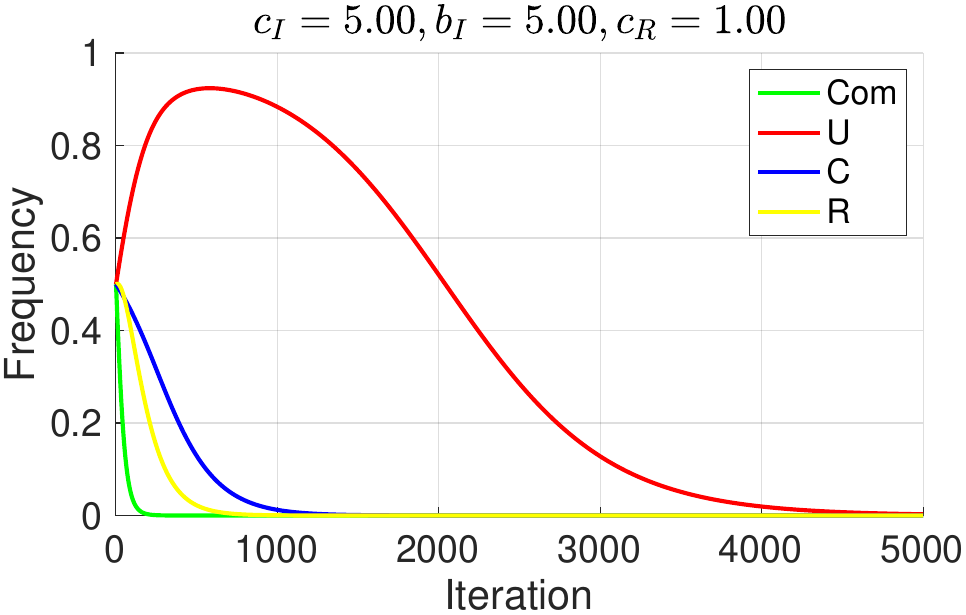}
    }
    \caption{Numerical integration of the evolution equation for two models, with negative impact of unsafe AI ($\epsilon = -0.1$) and high investigation cost ($c_I$=5.0). The left column shows the results of Model I, and the right column shows the results of Model II. Parameters are set as  $b_U = 4, b_P = 4, b_R = 4, c_P = 0.5, c_w = 1, u = 1.5, v = 0.5, b_{f_o} = 1,  p_w = 0.5$.}
    \label{fig:NumericalIntegration4}
\end{figure}

\begin{figure}
    \centering
    \subfigure{\label{fig:barcharta}
     \includegraphics[width=0.3\linewidth]{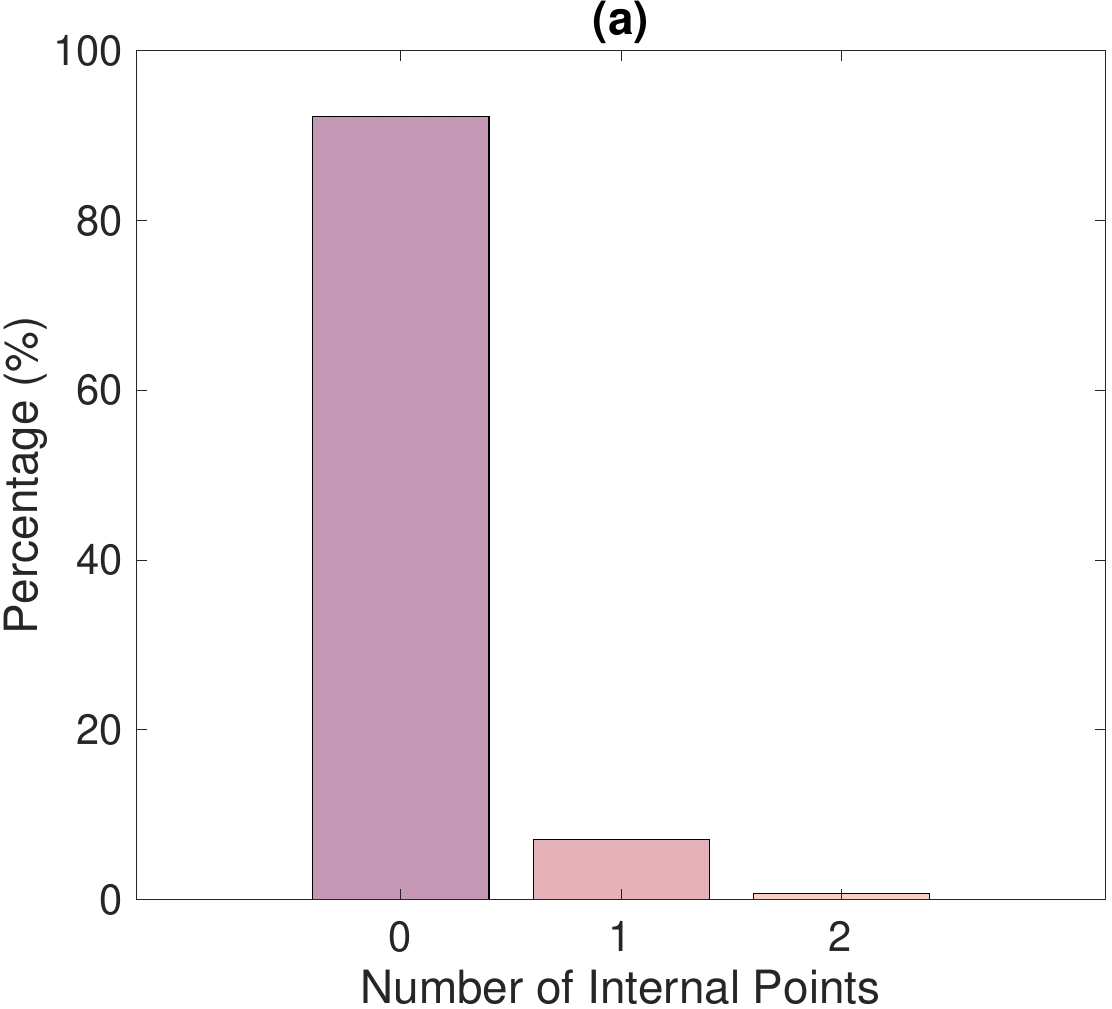} 
    }
    \subfigure{\label{fig:barchartb}
 \includegraphics[width=0.3\linewidth]{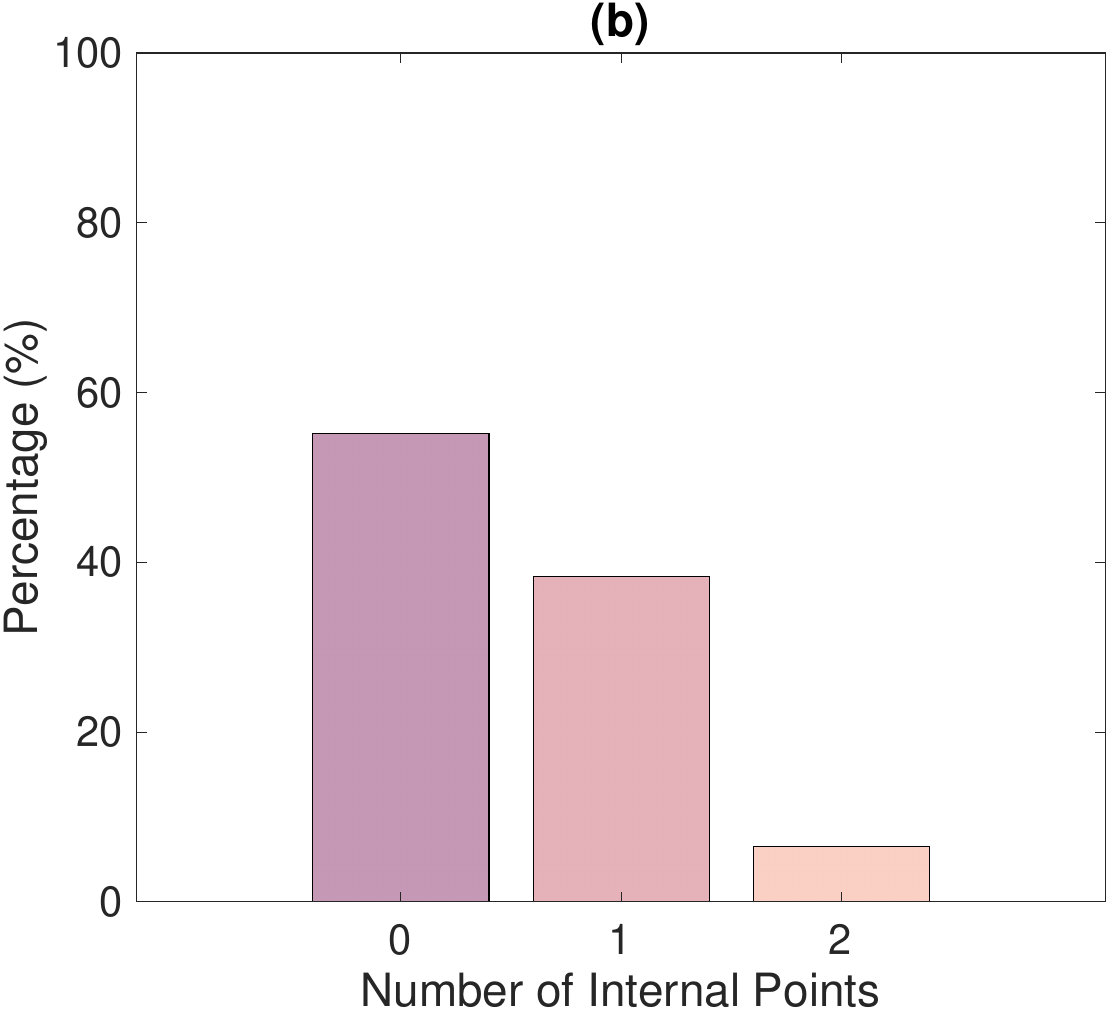} 
    }
    \caption{Percentage of the number of internal fixed points for two models. Shown are the results of 10,000 independent calculations. Parameters a randomly selected $b_I\in[0,4], b_U\in[0,4], b_P\in[0,4], b_R\in[0,4], c_I\in[0,1], c_P\in[0,1], c_R\in[0,1], c_w \in[0,1], v\in[0,1], u\in[0,5v], b_{f_o}\in[0,5], \epsilon\in[-2,1], p_w\in[0,1], c_w\in[0,1]$.}
\end{figure}

\clearpage

\bibliographystyle{IEEEtran}
\bibliography{refs} % replace by the name of your .bib file
\end{document}